\theoremstyle{definition}
\newtheorem{remark}{Remark}
\theoremstyle{plain}
\newtheorem{theorem}{Theorem}
\newtheorem{lemma}{Lemma}
\newtheorem{prop}{Proposition}
\newtheorem{assu}{Assumption}
\DeclareMathOperator*{\argmax}{arg\,max}
\begin{document}

\title{FedCGD: Collective Gradient Divergence Optimized Scheduling for Wireless Federated Learning}

\author{Tan Chen,~\IEEEmembership{Graduate Student Member,~IEEE}, Jintao Yan,~\IEEEmembership{Student Member,~IEEE}, \\Yuxuan Sun,~\IEEEmembership{Member,~IEEE}, Sheng Zhou,~\IEEEmembership{Senior Member,~IEEE}, Zhisheng Niu,~\IEEEmembership{Fellow,~IEEE}
}
\maketitle

\begin{abstract}
Federated learning (FL) is a promising paradigm for multiple devices to cooperatively train a model. When applied in wireless networks, two issues consistently affect the performance of FL, i.e., data heterogeneity of devices and limited bandwidth. \textcolor{black}{Many papers have investigated device scheduling strategies considering the two issues. However, most of them recognize data heterogeneity as a property of individual devices. In this paper, we prove that the convergence speed of FL is affected by the sum of device-level and sample-level collective gradient divergence (CGD). The device-level CGD refers to the gradient divergence of the scheduled device group, instead of the sum of the individual device divergence. The sample-level CGD is statistically upper bounded by sampling variance, which is inversely proportional to the total number of samples scheduled for local update.} To derive a tractable form of the device-level CGD, we further consider a classification problem and transform it into the weighted earth moving distance (WEMD) between the group distribution and the global distribution. Then we propose FedCGD algorithm to minimize the sum of multi-level CGDs by balancing WEMD and sampling variance, within polynomial time. Simulation shows that the proposed strategy increases classification accuracy on the CIFAR-10 dataset by up to 4.2\% while scheduling 41.8\% fewer devices, and flexibly switches between reducing WEMD and reducing sampling variance.

\end{abstract}

\begin{IEEEkeywords}
Federated learning, wireless networks, group data heterogeneity, sampling variance, weighted earth moving distance
\end{IEEEkeywords}

\section{Introduction}
Federated learning (FL), where multiple devices cooperatively train a model by updating the model parameters and sharing the gradients only through a parameter
server, is a popular solution for privacy-preserving distributed
machine learning \cite{mcmahan2017communication,reddi2020adaptive}. When deployed in wireless networks\cite{lim_federated_2020}, FL can leverage the data scattered on mobile devices, and has great potential in enhancing the quality of communication, enabling intelligent and automated vehicles, and the semantic communication system, etc\cite{qin_federated_2021}. 

Implementing FL in wireless networks faces two major issues. Firstly, FL leverages the data samples located on multiple devices, and the distributions of these data samples are usually different due to diverse behaviors and attributes of devices \cite{mcmahan2017communication}. For example, when people surf the Internet via mobile devices, the content preferences are personalized, so the contents that need to be cached are different; when vehicles drive on the road, the power levels of them, their destinations, and the traffic environments together determine their trajectories, so they generally have different trajectories. This phenomenon is called data heterogeneity or non-i.i.d. data. When FL is operated with non-i.i.d. data on devices, since the local objectives diverge from the global objective, it causes the performance degradation\cite{wang2021addressing}. 

Secondly, since the model parameters are shared through wireless channels, the varying channel state and limited bandwidth resources hinder the transmission process. For training tasks with a stringent latency budget\cite{sun_edge_2020} or a tight round deadline\cite{xie2022mob,yan2025dynamic}, transmitting model parameters of all devices may cause unprecedented latency due to some straggler devices, potentially preventing the model from converging within the given time budget\cite{shi_joint_2020}. 

\color{black}
Many studies have jointly considered data heterogeneity and the limited bandwidth resources of FL. However, most of them recognize data heterogeneity as a property of \textit{individual} devices and derive utility values consisting of the data heterogeneity measure and communication measure for each device. Then the joint problem is solved by sorting utility values.

We have a different perspective: Data heterogeneity is a \textit{collective} measure. Even though each device has an imbalanced dataset with individual heterogeneity, when we group their gradients to get a reduced ``collective heterogeneity", the performance may get better.

\color{black}
Therefore, in this work, \textcolor{black}{we focus on searching for such a collective heterogeneity measure and establishing a corresponding relationship between collective data heterogeneity and communication constraints in wireless FL}. The main contributions of our work are summarized as follows.

(1) The convergence speed of wireless FL is analyzed,  revealing its dependence on the sum of \textit{device-level and sample-level collective gradient divergence} (CGD). Device-level CGD refers to the gradient divergence of all the scheduled devices. Sample-level CGD is statistically upper bounded by sampling variance, which is inversely proportional to the total number of samples scheduled for local update.

(2) The classification problem is considered, and device-level CGD is transformed into the weighted earth moving distance (WEMD) between the group data distribution and the global data distribution. Device scheduling strikes a balance between WEMD and sampling variance.

(3) An optimization problem to minimize the sum of multi-level CGDs by device scheduling and bandwidth allocation is formulated. The problem is proven to be NP-hard. A greedy algorithm in polynomial time and a modified coordinate descent algorithm in pseudo-polynomial time are then proposed to solve the problem, which reports average errors of 5.16\% and 0.19\%, respectively.

(4) Simulation results verify that the proposed algorithms increase classification accuracy on the CIFAR-10 dataset by up to 4.2\% while scheduling 41.8\% fewer devices. Besides, the proposed algorithms also show flexibility to switch between reducing WEMD and reducing sampling variance on the CIFAR-10 and CIFAR-100 datasets.

The rest of this paper is organized as follows. Section \ref{Sec-1} introduces the related work. Section \ref{Sec-2} describes the system model and presents the training algorithm. In Section \ref{Sec-3}, convergence analysis of the system is conducted, showing the impact of the device scheduling on the multi-level CGDs. Section \ref{Sec-4} presents the optimization problem and solutions. Section \ref{Sec-5} shows the simulation results and analysis. Finally, Section \ref{Sec-6} concludes the work.
\section{Related Work}
\label{Sec-1}
Many solutions have been proposed to alleviate the impact of data heterogeneity, such as introducing auxiliary data\cite{zhao2018federated,jeong2018communication}, estimating and canceling the degree of data heterogeneity through side information\cite{karimireddy2020scaffold,zhang2022federated}, adopting a personalized FL structure\cite{arivazhagan2019federated}, and so on.
To conquer the problem of limited bandwidth, part of the devices are scheduled for uploading models in order to reduce the round latency. However, scheduling fewer devices may also damage the performance of the aggregated model, so the global model needs more rounds to converge \cite{wang2019adaptive,shi_joint_2020}. A large amount of studies have analyzed the convergence speed or equivalent expressions of wireless FL w.r.t. the scheduling policy\cite{shi_joint_2020,chen_convergence_2021,liu_joint_2022,zhang_communication-efficient_2022,kim_beamforming_2023,yang_asynchronous_2024,chen_energy_2022}, and proposed algorithms to optimize the long-term latency of FL\cite{yang_scheduling_2020,qu_context-aware_2022,albelaihi_green_2022,zhang_joint_2025,luo_joint_2022,pan_contextual_2024}. Despite their values, the relationship between data heterogeneity and scheduling is not investigated.

Some papers have further considered the interplay of data distributions and scheduling.
In \cite{ren_scheduling_2020}, the importance of an edge device's learning update is defined by the gradient variance, and a trade-off between channel quality and update importance is investigated by optimizing the weighted sum of them. The authors of \cite{amiri_convergence_2021} adopt quantization to guarantee that the quantized model can be successfully uploaded in the given time slots. The importance of the model updates is determined by the $l_2$-norm of the quantized gradients, and devices with top-K importance values are scheduled. Ref. \cite{luo_adaptive_2024} also adopts $l_2$-norm of gradients to measure the importance, and proposes a propolistic scheduling scheme considering both the channel quality and the gradient norm. In \cite{liu_data-importance_2021}, data uncertainty, which is declared to be the distance to the decision boundary in SVM and entropy in CNN, is considered as a measure for scheduling. In \cite{xu_client_2021}, a long-term scheduling utility w.r.t the data size and the current training round is considered, and the Lyapunov method is adopted to maximize the overall utility with a long-term energy constraint. The authors of \cite{tang_fedcor_2022} assume that the loss changes of devices in a round follow a GP, and develop an iterative scheduling strategy by predicting the loss change with the upper confidence bound method. Although the impact of data heterogeneity implicitly affects the importance measures of these papers, the correlation between importance measures and data heterogeneity is not well investigated.

In \cite{deng_auction_2022}, a reinforcement learning structure is adopted, where the data sizes and mislabel rates are considered as the measure of data heterogeneity to lead the scheduling action. The authors of \cite{saha_data-centric_2023} model data heterogeneity as the variation of mean and standard deviation among devices, and aim to optimize a utility function of FL composed of data heterogeneity, communication cost, and computation cost. In \cite{lee_data_2023}, the balance score of data is calculated by the Kullback-Leibler divergence between the target distribution and the device distribution. The authors find some device clusters according to their balance scores at the start of training, and use a multi-armed bandit (MAB) to select the cluster with the lowest convergence time every round. In \cite{zhang_stabilizing_2025}, the local-update stability is derived w.r.t. the client-variance upper-bound, and Nesterov accelerated gradient is introduced to improve the stability and accelerate FL. Despite the effectiveness of the data measures in experiments of these works, convergence analysis is not given, and thus, the relationship between the data measures and the convergence speed is unclear.

In \cite{sun_channel_2024}, the convergence speed of FL's loss function with over-the-air aggregation is derived, which shows that each device impacts the performance of FL by the weighted sum of its communication distortion and gradient variance. A probabilistic scheduling policy is then investigated to accelerate the convergence speed. \textcolor{black}{Despite its value, the authors recognize data heterogeneity as a property of individual devices, and add them together while scheduling.}

Recently, some work \cite{zhang_coalitional_2024,zhang_fed-cbs_2023} has started to notice the collective property of data heterogeneity. 
The authors of \cite{zhang_coalitional_2024} prove that the collective earth mover’s distance (EMD) of selected devices is positively correlated to the convergence bound of FL, but do not investigate how to minimize the collective EMD by scheduling.
Ref. \cite{zhang_fed-cbs_2023} expresses the degree of data heterogeneity by quadratic class-imbalance degree (QCID). Theoretical analysis proves that QCID is positively correlated to the convergence speed of FL, and a privacy-preserving method to estimate and update the value of QCID with a combinatorial upper confidence bounds algorithm is proposed. However, the authors assume that a fixed number of devices are scheduled, which is not realistic in wireless FL, where devices face diverse and dynamic channel states. 
\section{System Model}
\label{Sec-2}

\begin{figure}[t]
\centering
\includegraphics[width=0.5\textwidth]{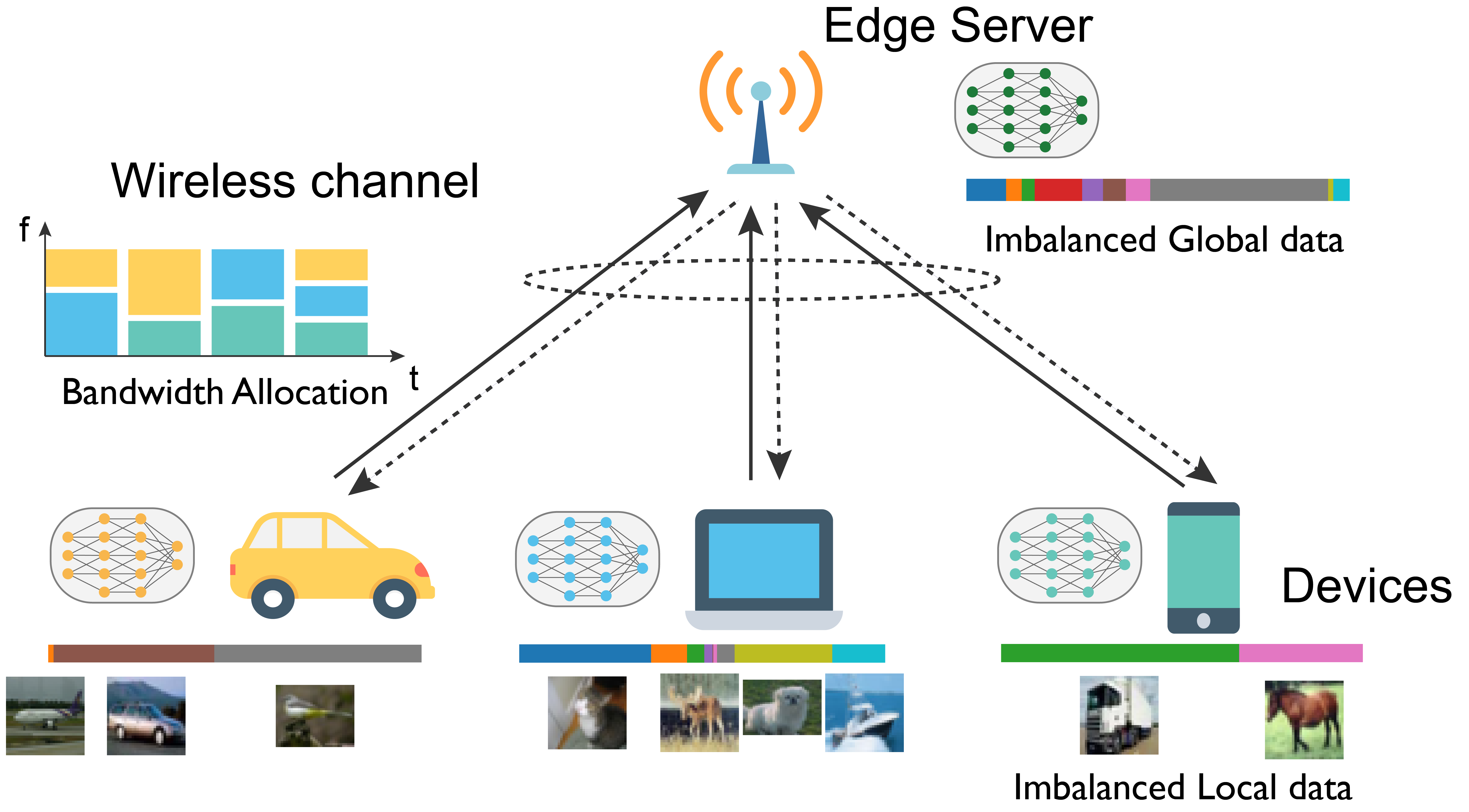}
\caption{Illustration of wireless FL with heterogeneous data.}
\label{FL}
\end{figure}

We consider an FL system in wireless networks, as depicted in Fig. \ref{FL}. A central edge server, which usually represents a base station, covers a limited area around itself. Several mobile devices are connected to the edge server. Due to the dynamic wireless channel state and the computing capability of devices, the set of available devices keeps changing. Each device has its local data, and they cooperatively train a model in a federated manner with the help of the edge server.

The FL task aims to find the connection between inputs $\boldsymbol{x}_i$ and labels $y_i$ in the global dataset $\mathcal{D}=\{\boldsymbol{x}_i,y_i \}$. Each device $v$ holds a local dataset $\mathcal{D}_v$ sampling from the global dataset. Because of the diverse behaviors of devices, the sampling process is not homogeneous. We assume that the sampling follows a distribution $\mathcal{P}$, where $P\{\mathcal{D}_v\sim \mathcal{P}_k\}=p_k, k=1,\cdots,K$ denotes the probability of $\mathcal{D}_v$ to follow the $k$-th pattern of data distribution. Note that $K$ denotes the total number of data patterns. Actually, $K$ can also be infinity, which will not influence the analysis.

For a sample $\{\boldsymbol{x}_i,y_i \}$, let $l_i \left(\boldsymbol{w}\right)$ be the sample loss function for models $\boldsymbol{w}$. The loss function of the $v$-th device is given by $f_v\left(\boldsymbol{w}\right)=\frac1{|\mathcal{D}_v|} \sum_{i\in \mathcal{D}_v}l_i \left(\boldsymbol{w}\right)$, and the global loss function is then determined by an average of the sample loss functions, $F\left(\boldsymbol{w}\right)=\mathbb{E}_v \left[ f_v\left(\boldsymbol{w}\right)\right] $. The training objective of FL is $\min_{\boldsymbol{w}} F\left(\boldsymbol{w}\right)$.

\subsection{Federated Averaging}

We call the stochastic gradient descent (SGD) process on a batch of samples at each device a local update. Denote the models of the $v$-th device, the edge server at the $j$-th edge epoch by $\boldsymbol{w}_{v}^{(j)}, \boldsymbol{w}^{(j)}$, respectively.

The training procedure is illustrated in Fig. \ref{FL}. The edge server initializes with a global model $\boldsymbol{w}^{(0)}$. The training stage consists of the following three steps within each epoch:

    \textbf{1) Edge model distribution}: Let the edge server maintain a covering device set $\mathcal{V}^{(j)}$. 
    The edge server distributes the edge model to all the devices within its coverage, i.e., $\boldsymbol{w}_{v}^{(j)}=\boldsymbol{w}^{(j)}, v\in \mathcal{V}^{(j)}$.

     \textbf{2) Local update}: 
     For each local update, the $v$-th device performs SGD using its local data by 
     \begin{align}
         \boldsymbol{w}_v^{(j)} = \boldsymbol{w}_v^{(j-1)} - \eta\sum\limits_{t=0}^{\tau-1}\nabla f_{v,t}(\boldsymbol{w}_v^{(j, t)}), \label{localupdate}
     \end{align}
     where $\eta$ is learning rate, $\nabla f_{v,t} \left(\boldsymbol{w}_{v}^{(j,t)}\right)\!=\!\nabla f_{v} \left(\boldsymbol{w}_{v}^{(j,t)}, \boldsymbol{\xi}_{v}^{(j,t)}\right)$ is the stochastic gradient of the loss function with data batch $\boldsymbol{\xi}_{v}^{(j,t)}$ sampled from $\mathcal{D}_v$ at $t$-th local update, and $\boldsymbol{w}_v^{(j, t)}=\boldsymbol{w}_v^{(j, t-1)}- \eta\nabla f_t(\boldsymbol{w}_v^{(j, t-1)})$. 
     The local update is repeated $\tau$ times, where $\tau$ is called the local iteration.

    \textbf{3) Device scheduling \& Edge aggregation}: The edge server first updates the covering device set $\mathcal{V}^{(j)}$. Then it chooses the scheduled devices by $\Pi^{(j)} \!=\! \{v|v\in\mathcal{V}^{(j)}, 
    x_{v,j}\!=\!1\}$, where $x_{v,j}\in\{0,1\}$ denotes the scheduling decision. The scheduled devices send models to the edge server for aggregation. The aggregated model is denoted by  
    \begin{align}
        {\boldsymbol{w}}^{(j)}=\sum_{v\in \Pi^{(j)}}\alpha_{v}^{(j)} \boldsymbol{w}_{v}^{(j)},\label{edgeagg}
    \end{align}
    where $\alpha^{(j)}_{v}\!\triangleq\!\frac{|\mathcal{D}_{v}|}{\sum_{v'\in \Pi^{(j)}}|\mathcal{D}_{v'}|}$ denotes the aggregation weight of the $v$-th device.

    The training process consists of $J$ epochs.

\subsection{Communication and Computation Model}
Since we mainly focus on the aggregation step, the computation deadline is set to a fixed value $d_{\text{cp}}$. Devices that have finished the local update before $d_{\text{cp}}$ are candidates for transmitting models. 

As for communication, since the transmit power of the base station is much larger than that of the devices, we only consider the uplink latency. Frequency-Division Multiple Access (FDMA) is adopted, where devices share a total bandwidth of $B$. Assume $B_{v,j}$ is allocated to device $v$ at the $j$-th epoch, and the average channel gain during transmission is denoted by $\Bar{H}_{v, j}$. Then the transmission rate $r_{v,j}$ is calculated by
\begin{align}
    r_{v,j} = B_{v, j}\log_2\left(1+\frac{S\Bar{H}_{v, j}}{B_{v, j}N_0}\right),
\end{align}
where $N_0$ is the spectrum density of the noise. Denote the total data size of the model as $D_{\boldsymbol{w}}$, so the communication latency becomes
\begin{align}
    d_{v, j}=\frac{D_{\boldsymbol{w}}}{r_{v,j}}.
\end{align}

Assume that in each epoch, a latency budget $d_{\text{cm}}$ is set for transmission. If a device does not transmit all its model parameters in time, the model fails to arrive at the edge server.

To investigate the impact of device scheduling and bandwidth allocation on the performance of FL, the convergence analysis of FL is conducted.
\section{Convergence Analysis}
\label{Sec-3}
\textcolor{black}{
In this section, the convergence speed of convex FL is derived. Firstly, the convex loss function is bounded by the rate of centralized learning subtracted by the federal-central (FC) difference. Then the FC difference is split into mainly \textit{two levels of gradient divergence}. The results are also extended to non-convex FL. Finally, the meaning of the two terms and their relationship are elaborated. }

Define $\boldsymbol{v}^{(j)}$ as the virtual centralized model. It monitors the process of centralized learning. It evolves as
    \begin{equation}
\boldsymbol{v}^{(j)}=\boldsymbol{w}^{(j-1)}-\eta\tau\nabla F\left(\boldsymbol{w}^{(j-1)}\right).
    \end{equation}
The FC difference at the $j$-th epoch is denoted by $U_j$, where
    \begin{align}
U_j=\left\lVert\boldsymbol{w}^{(j)}-\boldsymbol{v}^{(j)}\right\rVert.
    \end{align}
$U_j$ represents the difference in the convergence speed between FL and centralized
learning. Also, denote the optimal model parameters by $\boldsymbol{w}^*$.

\begin{assu}\label{assu1} 
We assume the following for all $v$:

\begin{enumerate}
\item $f_v\left(\boldsymbol{w}\right)$ is convex;
\item $f_v\left(\boldsymbol{w}\right)$ is $\rho$-Lipschitz, i.e., 
$\left\lVert {f_v\left(\boldsymbol{w}\right)-f_v\left(\boldsymbol{w}' \right)} \right\rVert\le \rho \left\lVert {\boldsymbol{w}-\boldsymbol{w}'}  \right\rVert$ for any $\boldsymbol{w},\boldsymbol{w}'$;

\item $f_v\left(\boldsymbol{w}\right)$ is $\beta$-smooth, i.e., 
$\left\lVert {\nabla f_v\left(\boldsymbol{w}\right)-\nabla f_v\left(\boldsymbol{w}'\right)} \right\rVert\le \beta \left\lVert {\boldsymbol{w}-\boldsymbol{w}'}  \right\rVert$ for any $\boldsymbol{w},\boldsymbol{w}'$.

\item The expected squared norm of gradients is uniformly bounded, i.e., $\mathbb{E}\left\lVert {\nabla f_v\left(\boldsymbol{w}\right)} \right\rVert^2 \le g^2$ for any $v, \boldsymbol{w}$.

\item Stochastic gradient is independent, unbiased and
variance-bounded, i.e., for any $v, \boldsymbol{w}$, taking the expectation over stochastic data sampling, we have
\begin{align}
    &\mathbb{E}_{\boldsymbol{x}_{v,i}}\left[\nabla f_v(\boldsymbol{w},\boldsymbol{x}_{v,i})\right]=\nabla f_v(\boldsymbol{w}),\\
    &\mathbb{E}_{\boldsymbol{x}_{v,i}}\left[\left\lVert\nabla f_v(\boldsymbol{w},\boldsymbol{x}_{v,i})-\nabla f_v(\boldsymbol{w})\right\rVert^2\right]\le \sigma^2.
\end{align}

\end{enumerate}
\end{assu}

Here assumption 1) to 3) follows \cite{wang2019adaptive,chen_mobility_2025}, assumption 4) follows \cite{li_convergence_2020,feng2022mobility}, and assumption 5) follows \cite{sun_dynamic_2021,reisizadeh_fedpaq_nodate,wang_federated_2022}.

Then we have Proposition \ref{prop1} to describe the overall convergence rate of FL.

\begin{prop}\label{prop1}
    If Assumption \ref{assu1} holds, then after $J$ epochs, for $\eta,\epsilon$ satisfying conditions:
    \begin{enumerate}
        \item[(1)] $\eta\le\frac1{\beta}$,
        \item[(2)] $\omega\eta\tau\left(1\!-\!\frac{\beta\eta\tau}2\right)\!-\!\frac{\rho \mathbb{E}\left[U_j\right]}{\epsilon^2}\!>\!0$ for all $j$, 
        where $\omega\triangleq\min\limits_j\frac{1}{\left\lVert{\boldsymbol{v}^{\left(j\right)}-\boldsymbol{w}^*}\!\right\rVert^2}$,
        \item[(3)]
        $F\left(\boldsymbol{v}^{\left(j\right)}\right)-F\left(\boldsymbol{w}^*\right)\ge\epsilon$ for all $j$,\label{cond}
    \end{enumerate}
 the loss function of FL is bounded by
    \begin{equation}
        F\left(\boldsymbol{w}^{\left(J\right)}\right)-F\left(\boldsymbol{w}^*\right)
        \le\frac1{J\omega\eta\tau\left(1\!-\!\frac{\beta\eta\tau}2\right)\!-\!\frac{\rho}{\epsilon^2}\sum\limits_{j=1}^J \mathbb{E}\left[U_j\right]}.\label{fw}
    \end{equation}
\end{prop}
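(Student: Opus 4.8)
The plan is to recast the claimed bound as a telescoping statement about the \emph{reciprocal} optimality gap. Writing $\Delta_j := F(\boldsymbol{w}^{(j)}) - F(\boldsymbol{w}^*)$ for the federated gap and $\tilde\Delta_j := F(\boldsymbol{v}^{(j)}) - F(\boldsymbol{w}^*)$ for the gap of the virtual centralized iterate, I would aim to establish the per-epoch increment
\[
\frac{1}{\Delta_j} - \frac{1}{\Delta_{j-1}} \ge \omega\eta\tau\left(1 - \frac{\beta\eta\tau}{2}\right) - \frac{\rho\, U_j}{\epsilon^2}.
\]
Summing this over $j = 1,\dots,J$, discarding $1/\Delta_0 \ge 0$, taking expectations over the random sampling (so that $U_j \mapsto \mathbb{E}[U_j]$), and inverting then reproduces \eqref{fw} exactly. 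Condition (2) is precisely what keeps the accumulated right-hand side, and hence the denominator of \eqref{fw}, positive, so that the final inversion is legitimate.

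To produce the increment I would assemble three ingredients. First, since $\boldsymbol{v}^{(j)}$ is one gradient step of length $\eta\tau$ from $\boldsymbol{w}^{(j-1)}$ along $\nabla F(\boldsymbol{w}^{(j-1)})$, the descent lemma for a $\beta$-smooth $F$ gives
\[
\tilde\Delta_j \le \Delta_{j-1} - \eta\tau\left(1 - \frac{\beta\eta\tau}{2}\right)\left\lVert \nabla F(\boldsymbol{w}^{(j-1)})\right\rVert^2 .
\]
Second, convexity yields $\Delta_{j-1} \le \lVert \nabla F(\boldsymbol{w}^{(j-1)})\rVert\,\lVert \boldsymbol{w}^{(j-1)} - \boldsymbol{w}^*\rVert$, i.e.\ $\lVert \nabla F(\boldsymbol{w}^{(j-1)})\rVert^2 \ge \omega\,\Delta_{j-1}^2$ once the inverse squared distance to the optimum is replaced by its uniform lower bound $\omega$. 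Third, the $\rho$-Lipschitz assumption transfers the federated model back to the centralized one, $\Delta_j \le \tilde\Delta_j + \rho\lVert \boldsymbol{w}^{(j)} - \boldsymbol{v}^{(j)}\rVert = \tilde\Delta_j + \rho U_j$. Combining the first two gives $\tilde\Delta_j \le \Delta_{j-1}\bigl(1 - c\omega\Delta_{j-1}\bigr)$ with $c := \eta\tau(1 - \tfrac{\beta\eta\tau}{2})$.

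The final step is two applications of the elementary bounds $\tfrac{1}{1-x}\ge 1+x$ and $\tfrac{1}{1+x}\ge 1-x$. From the combined descent inequality I would get $\tfrac{1}{\tilde\Delta_j} \ge \tfrac{1}{\Delta_{j-1}}(1 + c\omega\Delta_{j-1}) = \tfrac{1}{\Delta_{j-1}} + c\omega$, where condition (2) (guaranteeing $c\omega\Delta_{j-1}<1$) validates the expansion. From $\Delta_j \le \tilde\Delta_j + \rho U_j$ I would get $\tfrac{1}{\Delta_j} \ge \tfrac{1}{\tilde\Delta_j} - \tfrac{\rho U_j}{\tilde\Delta_j^2}$, and here condition (3), $\tilde\Delta_j = F(\boldsymbol{v}^{(j)}) - F(\boldsymbol{w}^*)\ge\epsilon$, is used exactly to replace $\tilde\Delta_j^2$ by its lower bound $\epsilon^2$, giving $\tfrac{\rho U_j}{\tilde\Delta_j^2}\le \tfrac{\rho U_j}{\epsilon^2}$. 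Chaining the two displays yields the target increment.

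The main obstacle is this second assembly step: converting the \emph{additive} function-value error $\rho U_j$ into a controlled \emph{reciprocal} perturbation. This is where the $1/\epsilon^2$ factor is born, and it is why condition (3) must hold at \emph{every} epoch, so that no gap $\tilde\Delta_j$ collapses and inflates the reciprocal error. A related subtlety is the bookkeeping of which iterate's distance to $\boldsymbol{w}^*$ enters the convexity step: the gradient is evaluated at $\boldsymbol{w}^{(j-1)}$ while $\omega$ is defined through $\boldsymbol{v}^{(j)}$, so I would verify that the uniform minimum $\omega$ genuinely lower-bounds the relevant inverse squared distance along the whole trajectory. Finally, care is needed when commuting the expectation over stochastic sampling with the reciprocal; I would defer taking expectations until after the per-epoch increment is linear in $U_j$, which is the reading under which the stated bound is cleanest.
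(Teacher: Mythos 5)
Your proposal is correct and follows essentially the same route as the paper's proof in Appendix A: a reciprocal-gap telescoping argument assembled from the smoothness descent lemma, convexity plus Cauchy--Schwarz (which introduces $\omega$), $\rho$-Lipschitzness (which introduces $\rho U_j$), and condition (3) to cap the resulting reciprocal perturbation by $\rho U_j/\epsilon^2$, with condition (2) ensuring the final denominator is positive; the differences are only bookkeeping (your chained per-epoch increment and Taylor-type bounds $\tfrac{1}{1-x}\ge 1+x$, $\tfrac{1}{1+x}\ge 1-x$ versus the paper's split telescoping sums and division by the product of gaps). Even the two subtleties you flag --- whether the convexity step's distance term ($\boldsymbol{w}^{(j-1)}$ versus $\boldsymbol{v}^{(j)}$) is genuinely covered by the definition of $\omega$, and where the expectation over sampling is taken relative to the reciprocal --- are handled at the same (loose) level of rigor in the paper itself, so they do not constitute a gap relative to the paper's own argument.
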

\begin{proof}
    See Appendix \ref{app1}.
\end{proof}
Proposition \ref{prop1} indicates that the convergence rate of FL depends on two terms, i.e., the convergence rate of centralized learning and the FC difference. Since the convergence rate of centralized learning is not influenced by scheduling, we then focus on bounding the FC difference.

Define a virtual device model $\boldsymbol{\hat{w}}_v^{(j,t)}$ satisfying
\begin{align}
    &\boldsymbol{\hat{w}}_v^{(j,0)} = \boldsymbol{w}_v^{(j-1)},\\
    &\boldsymbol{\hat{w}}_v^{(j,t)} = \boldsymbol{\hat{w}}_v^{(j,t-1)} -\eta\nabla
    f_{v,j,t}(\boldsymbol{w}_v^{(j-1)}),
\end{align}
then we have the following lemma.

\begin{lemma}\label{lemma1}
The FC difference is composed of the following terms
\begin{align}
   \mathbb{E}\left[U_j\right]&\le \eta\tau\underbrace{\left\lVert\sum\limits_{v\in\Pi^{(j)}}\alpha_v^{(j)}\nabla f_v(\boldsymbol{w}^{(j-1)})
    - \nabla F(\boldsymbol{w}^{(j-1)})\right\rVert}_A
    \\&\hspace{5pt}+ \!\eta\tau\mathbb{E}\underbrace{\left\lVert\sum\limits_{v\in\Pi^{(j)}} \alpha_{v}^{(j)} \left(\nabla f_{v,j}(\boldsymbol{w}^{(j-1)})\!-\!\nabla f_{v}(\boldsymbol{w}^{(j-1)})\right)\right\rVert}_B
    \\&\hspace{5pt}+\!\mathbb{E}\left\lVert \boldsymbol{w}_v^{(j)} - \boldsymbol{\hat{w}}_v^{(j,\tau)} \right\rVert.\label{CEcomponents}
\end{align}

\end{lemma}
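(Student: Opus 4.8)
The plan is to treat \eqref{CEcomponents} as an exact identity followed by a single triangle inequality, so the work is entirely in the decomposition. I would start from $U_j=\lVert\boldsymbol{w}^{(j)}-\boldsymbol{v}^{(j)}\rVert$ and expand both models around the common starting point $\boldsymbol{w}^{(j-1)}$. Two facts drive the rewrite: the edge model distribution step forces $\boldsymbol{w}_v^{(j-1)}=\boldsymbol{w}^{(j-1)}$ for every scheduled $v$, and the aggregation weights satisfy $\sum_{v\in\Pi^{(j)}}\alpha_v^{(j)}=1$. Substituting the aggregation rule \eqref{edgeagg} and the centralized update for $\boldsymbol{v}^{(j)}$, and using these facts to write $\boldsymbol{w}^{(j-1)}=\sum_v\alpha_v^{(j)}\boldsymbol{w}_v^{(j-1)}$, I obtain
\begin{align}
\boldsymbol{w}^{(j)}-\boldsymbol{v}^{(j)}=\sum_{v\in\Pi^{(j)}}\alpha_v^{(j)}\bigl(\boldsymbol{w}_v^{(j)}-\boldsymbol{w}_v^{(j-1)}\bigr)+\eta\tau\nabla F(\boldsymbol{w}^{(j-1)}),
\end{align}
which expresses the gap purely through the per-device increments.

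Next I would insert the virtual device model as an intermediate anchor. Splitting each increment as $\boldsymbol{w}_v^{(j)}-\boldsymbol{w}_v^{(j-1)}=(\boldsymbol{w}_v^{(j)}-\hat{\boldsymbol{w}}_v^{(j,\tau)})+(\hat{\boldsymbol{w}}_v^{(j,\tau)}-\boldsymbol{w}_v^{(j-1)})$ peels off the third term of \eqref{CEcomponents}, which collects the error from evaluating stochastic gradients at the drifting local iterates $\boldsymbol{w}_v^{(j,t)}$ rather than at the frozen point $\boldsymbol{w}_v^{(j-1)}$. The remaining piece, $\hat{\boldsymbol{w}}_v^{(j,\tau)}-\boldsymbol{w}_v^{(j-1)}=-\eta\sum_{t=0}^{\tau-1}\nabla f_{v,j,t}(\boldsymbol{w}_v^{(j-1)})$, is a sum of $\tau$ stochastic gradients all taken at the common point, so it equals $-\eta\tau\nabla f_{v,j}(\boldsymbol{w}^{(j-1)})$ once $\nabla f_{v,j}$ is read as the per-epoch averaged stochastic gradient. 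Combining this with the surviving $\eta\tau\nabla F$ term collapses the middle of the expression into $-\eta\tau\bigl[\sum_v\alpha_v^{(j)}\nabla f_{v,j}(\boldsymbol{w}^{(j-1)})-\nabla F(\boldsymbol{w}^{(j-1)})\bigr]$.

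The final step is the add-and-subtract that separates the two levels. Inserting $\pm\sum_v\alpha_v^{(j)}\nabla f_v(\boldsymbol{w}^{(j-1)})$ into that bracket splits it into the sample-level term $B$ (stochastic minus true device gradients) and the device-level term $A$ (weighted true device gradients minus the global gradient). Applying the triangle inequality to the three resulting groups, and Jensen's inequality $\lVert\sum_v\alpha_v^{(j)}(\cdot)\rVert\le\sum_v\alpha_v^{(j)}\lVert\cdot\rVert$ to pull the convex weights through the norm on the third group, reduces that group to a weighted average of per-device terms; since the subsequent bound on it (via $\beta$-smoothness and the variance assumption) will be uniform in $v$, the weighted average collapses to the single-device form displayed. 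Taking expectations then leaves term $A$ outside $\mathbb{E}$, as it is deterministic given the schedule, and wraps terms $B$ and the third term in $\mathbb{E}$, reproducing \eqref{CEcomponents} exactly.

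I expect the main subtlety to be bookkeeping rather than any hard inequality. The one place that needs care is matching the frozen-point stochastic gradient sum to $\eta\tau\nabla f_{v,j}$, i.e.\ pinning down the averaging convention behind the subscript $j$ in $\nabla f_{v,j}$, and then keeping straight which of the three groups carries an expectation. Because the decomposition is an identity up to a single triangle/Jensen step, no structural estimate is required here; the genuine bounds on $A$, $B$, and the third term are deferred to the later results that connect them to the two levels of collective gradient divergence.
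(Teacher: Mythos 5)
Your proposal is correct and follows essentially the same route as the paper's proof: insert the virtual device model $\boldsymbol{\hat{w}}_v^{(j,\tau)}$ as the intermediate anchor, use the facts that $\sum_{v}\alpha_v^{(j)}=1$ and $\boldsymbol{w}_v^{(j-1)}=\boldsymbol{w}^{(j-1)}$ so that $\boldsymbol{\hat{w}}_v^{(j,\tau)}-\boldsymbol{v}^{(j)}=-\eta\tau\bigl(\nabla f_{v,j}(\boldsymbol{w}^{(j-1)})-\nabla F(\boldsymbol{w}^{(j-1)})\bigr)$, apply the triangle/Jensen inequality, and then add-and-subtract $\sum_v\alpha_v^{(j)}\nabla f_v(\boldsymbol{w}^{(j-1)})$ to separate the sample-level and device-level terms. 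The only difference is cosmetic (you expand around $\boldsymbol{w}^{(j-1)}$ and split all three groups in one triangle-inequality step, whereas the paper peels off the local-iteration term first and then splits its term $A$), and your handling of the bookkeeping subtleties---the averaging convention behind $\nabla f_{v,j}$, the uniform-in-$v$ collapse of the third term, and which terms carry an expectation---matches the paper's implicit treatment.
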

\begin{proof}
    See Appendix \ref{app2}.
\end{proof}

\begin{figure}[t]
\centering
\includegraphics[width=0.4\textwidth]{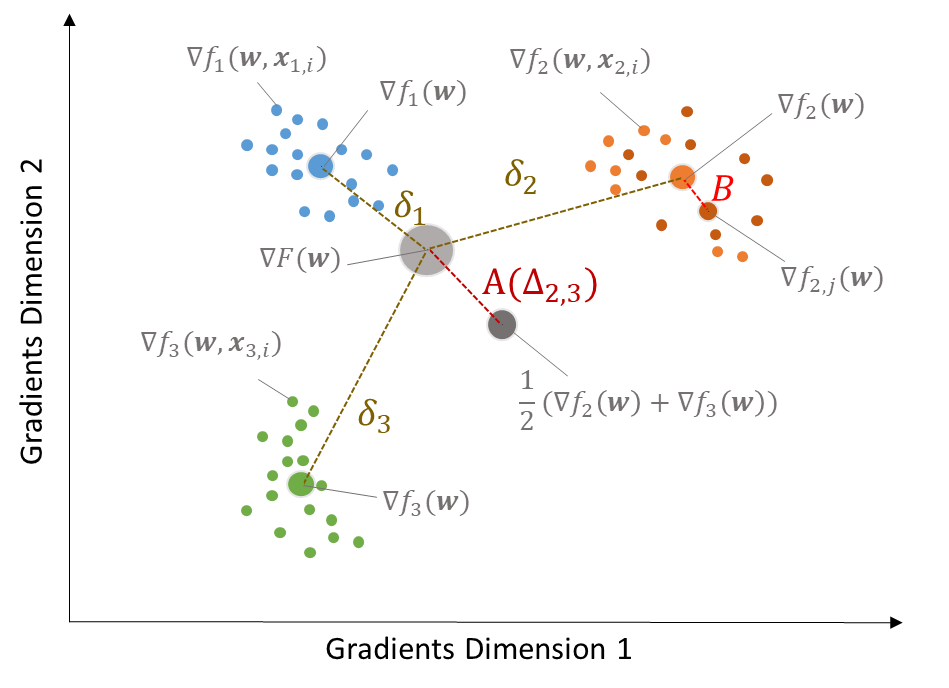}
\caption{Illustration of multi-level collective gradient divergences.}
\label{CFdiff}
\end{figure}

\textcolor{black}{Note that $\nabla f_{v,j}(\boldsymbol{w}^{(j)})$ denotes the gradients of the data samples used by device $v$ during the $j$-th epoch, while $\nabla f_{v}(\boldsymbol{w}^{(j)})$ and $\nabla F(\boldsymbol{w})$ denotes the gradients of the local dataset of device $v$ and the global dataset, respectively. To better understand these gradients, we draw a schematic diagram as shown in Fig. \ref{CFdiff}. We can see high similarity between term A and term B: they are just different levels of gradient divergence, where term A expresses device-level divergence, and term B expresses sample-level divergence. We will elaborate on them along with the last term of Eq. \eqref{CEcomponents} in the following part.}

\subsection{Device-Level Divergence: Device Data Heterogeneity}
During training, we want the samples of each epoch to be representative of the global dataset, so that the model can perform well on all samples from the global dataset. Since the global dataset is the union of all local datasets, when part of the devices are scheduled, the samples of the epoch are actually drawn from part of the global dataset, and thus may not be representative enough of the global dataset. This is how data heterogeneity comes.

In FL, the dataset is locally kept on devices, so the representativeness cannot be directly achieved. However, some substitutes can be found. Here we denote the data heterogeneity $\Delta^{(j)}$ by
\begin{align}
    \Delta^{(j)} = \left\lVert\sum\limits_{v\in\Pi^{(j)}}\alpha_v^{(j)}\nabla f_v(\boldsymbol{w}^{(j-1)})
    - \nabla F(\boldsymbol{w}^{(j-1)})\right\rVert,\label{dataerr}
\end{align}
which is exactly device-level gradient divergence.

\begin{remark}
    From the definition, we see that the divergence we consider is actually ``collective divergence", since it expresses the divergence between average gradients of the scheduled device group and the global gradients. Specifically, denote the individual device divergence by $\delta^{(j)}_v = \left\lVert\nabla f_v(\boldsymbol{w}^{(j-1)})
    - \nabla F(\boldsymbol{w}^{(j-1)})\right\rVert$, 
     and scheduling those devices with small $\delta^{(j)}_v$ does not lead to a minimal $\Delta^{(j)}$. Instead, we need to choose ``complementary" devices such that their ``collective divergence" is small. For example, in Fig. \ref{CFdiff}, although $\delta_1$ is much smaller than $\delta_2,\delta_3$, when we choose device 2 and 3 to compose $\Delta_{2,3}$, it is smaller than $\delta_1$.
\end{remark}

\begin{remark}
The ``data heterogeneity" that we consider is the ``gradient heterogeneity". Gradients imply the information of data, since they are calculated by a transformation, i.e., forward and backward of the data.
\textcolor{black}{It indicates that probably not all differences in data samples result in degraded performance. If two devices have two datasets with totally different samples, but the gradients generated by them have a subtle difference, then the heterogeneity may have little influence on the performance of FL.}
\end{remark}

\subsection{Sample-Level Divergence: Batch Sampling Variance}
In the training process, a mini-batch is sampled from the local dataset for a single epoch of training, because the local dataset is usually too large to go through at every epoch. Since each sample is unique, ignoring some samples introduces error, which we call \textbf{sampling variance}. The sampling variance is derived by bounding sample-level gradient divergence:
\begin{lemma}\label{lemma2}
Sample-level gradient divergence is bounded by
\begin{align}
\mathbb{E}\left\lVert\sum\limits_{v\in\Pi^{(j)}} \!\alpha_{v}^{(j)} \left(\nabla f_{v,j}(\boldsymbol{w}^{(j-1)})\!-\!\nabla f_{v}(\boldsymbol{w}^{(j-1)})\right)\right\rVert
\le\frac{\sigma}{\sqrt{|\Pi^{(j)}|b}},\label{samperr}
\end{align}
where $b$ denotes the batch size.
\end{lemma}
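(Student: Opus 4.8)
The plan is to bound the expected norm by first passing to the second moment via Jensen's inequality, then exploiting two independent sources of averaging---across the $b$ samples within each device's mini-batch, and across the independently-sampling scheduled devices---to produce the factor $|\Pi^{(j)}|b$ in the denominator. The whole point is that both levels of averaging reduce variance, and the $\sqrt{|\Pi^{(j)}|}$ improvement over a naive bound comes entirely from cross-device independence.

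First I would abbreviate $Z_v \triangleq \nabla f_{v,j}(\boldsymbol{w}^{(j-1)}) - \nabla f_v(\boldsymbol{w}^{(j-1)})$, so the quantity of interest is $\mathbb{E}\lVert \sum_{v\in\Pi^{(j)}} \alpha_v^{(j)} Z_v \rVert$. By Assumption \ref{assu1}(5) the stochastic gradient is unbiased, hence $\mathbb{E}[Z_v]=0$. Applying Jensen's inequality gives $\mathbb{E}\lVert \sum_v \alpha_v^{(j)} Z_v \rVert \le \bigl(\mathbb{E}\lVert \sum_v \alpha_v^{(j)} Z_v \rVert^2\bigr)^{1/2}$, reducing the problem to controlling the second moment.

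Next I would expand the squared norm as $\sum_{v,v'} \alpha_v^{(j)}\alpha_{v'}^{(j)} \mathbb{E}[\langle Z_v, Z_{v'}\rangle]$. Because sampling on distinct devices is independent and each $Z_v$ is zero-mean, every cross term with $v\neq v'$ vanishes, leaving $\sum_v (\alpha_v^{(j)})^2 \,\mathbb{E}\lVert Z_v\rVert^2$. For the per-device variance, I would write the mini-batch gradient as the empirical average $\frac1b\sum_{i=1}^b \nabla f_v(\boldsymbol{w}^{(j-1)}, \boldsymbol{x}_{v,i})$ over the $b$ batch samples; since these samples are drawn independently and each deviation from $\nabla f_v(\boldsymbol{w}^{(j-1)})$ is zero-mean with squared norm bounded by $\sigma^2$ (again Assumption \ref{assu1}(5)), the within-batch cross terms likewise cancel and $\mathbb{E}\lVert Z_v\rVert^2 \le \sigma^2/b$. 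Combining yields $\mathbb{E}\lVert \sum_v \alpha_v^{(j)} Z_v\rVert^2 \le \frac{\sigma^2}{b}\sum_{v} (\alpha_v^{(j)})^2$.

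The main obstacle---indeed the only step beyond routine second-moment manipulation---is collapsing $\sum_v (\alpha_v^{(j)})^2$ to $1/|\Pi^{(j)}|$. Since $\sum_v \alpha_v^{(j)}=1$ with $\alpha_v^{(j)}\ge 0$, Cauchy--Schwarz only gives $\sum_v (\alpha_v^{(j)})^2 \ge 1/|\Pi^{(j)}|$, with equality exactly when the aggregation weights are uniform, i.e.\ when the scheduled devices carry equal dataset sizes so that $\alpha_v^{(j)}=1/|\Pi^{(j)}|$. I would therefore confirm that this is the regime the analysis presumes, and emphasize that the $|\Pi^{(j)}|$-fold variance reduction hinges on the cross-device independence used above: the cruder triangle-inequality route $\mathbb{E}\lVert \sum_v \alpha_v^{(j)} Z_v\rVert \le \sum_v \alpha_v^{(j)}\mathbb{E}\lVert Z_v\rVert \le \sigma/\sqrt{b}$ loses the $\sqrt{|\Pi^{(j)}|}$ entirely. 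Substituting $\sum_v(\alpha_v^{(j)})^2 = 1/|\Pi^{(j)}|$ and taking the square root then delivers the claimed $\sigma/\sqrt{|\Pi^{(j)}|b}$.
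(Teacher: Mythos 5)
Your proof is correct and follows essentially the same route as the paper's: Jensen's inequality $\mathbb{E}\lVert X\rVert \le \sqrt{\mathbb{E}\lVert X\rVert^2}$ to pass to the second moment, independence across devices and batch samples together with unbiasedness to annihilate all cross terms, and the per-sample variance bound $\sigma^2$ from Assumption \ref{assu1}(5). The only difference is presentational: the paper silently substitutes $\alpha_v^{(j)}=1/|\Pi^{(j)}|$ from the start, whereas you keep general weights and correctly flag that collapsing $\sum_v \bigl(\alpha_v^{(j)}\bigr)^2$ to $1/|\Pi^{(j)}|$ requires uniform aggregation weights (equal local dataset sizes) --- a hypothesis the paper only states explicitly later, in Section \ref{Sec-4}.
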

\begin{proof}
    Taking assumption 5) in Assumption \ref{assu1} and considering the independence among samples and devices, we prove this lemma. The detailed proof is provided in Appendix \ref{app3}.
\end{proof}
Lemma \ref{lemma3} shows that sampling variance is inversely proportional to the total number of samples of the scheduled devices. The result can be easily generalized to devices with varying batch sizes by replacing $|\Pi^{(j)}|b$ with $\sum\limits_{v\in|\Pi^{(j)}|}b_v$.

\color{blue}\color{black}
\begin{remark}
   Despite the similarity of device-level and sample-level gradient divergence, in Sections \ref{Sec-3}-A and \ref{Sec-3}-B, they are treated in different ways. This is a different granularity of treating the divergence. If we consider manually selecting items, the divergence of each item is first calculated, and then ``collective divergence" is examined; if we adopt random selection, then the divergence of a single item is not needed, and the divergences are statistically bounded by a variance. Since this paper focuses on scheduling devices, we manually select devices while randomly selecting samples.
\end{remark}
\color{black}

\subsection{Local Iteration Bias}
In centralized learning or traditional distributed learning ($\tau=1$), the model is always synchronized for every device. However, in FL, the device models are shared every $\tau$ local iterations ($\tau>1$), so they will have subtle differences during local update. This brings bias to the model, which is expressed by the difference between the virtual device model and the real device model. We derive the bias by the following lemma
\begin{lemma}\label{lemma3}
The local iteration bias of FL satisfies
\begin{align}
    \mathbb{E}\left\lVert \boldsymbol{w}_v^{(j)} - \boldsymbol{\hat{w}}_v^{(j,\tau)} \right\rVert \le \frac{1}{2} \tau(\tau-1)\eta \beta g.\label{modelerr}
\end{align}

\end{lemma}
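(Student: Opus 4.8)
The plan is to express the gap between the real and virtual device models as an accumulation of per-iteration gradient mismatches, and then control each mismatch by smoothness. Unrolling both recursions from the common starting point $\boldsymbol{w}_v^{(j-1)}$, the real iterate obeys $\boldsymbol{w}_v^{(j)}=\boldsymbol{w}_v^{(j-1)}-\eta\sum_{t=0}^{\tau-1}\nabla f_{v,t}(\boldsymbol{w}_v^{(j,t)})$ by \eqref{localupdate}, while the virtual iterate obeys $\boldsymbol{\hat{w}}_v^{(j,\tau)}=\boldsymbol{w}_v^{(j-1)}-\eta\sum_{t=0}^{\tau-1}\nabla f_{v,j,t}(\boldsymbol{w}_v^{(j-1)})$ by its defining recursion. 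Subtracting and using that at each local step the two models reuse the same sampled batch (so the stochastic gradient is the same function evaluated at two different points), I would write
\begin{align}
\boldsymbol{w}_v^{(j)}-\boldsymbol{\hat{w}}_v^{(j,\tau)}=-\eta\sum_{t=0}^{\tau-1}\left(\nabla f_{v,t}\!\left(\boldsymbol{w}_v^{(j,t)}\right)-\nabla f_{v,t}\!\left(\boldsymbol{w}_v^{(j-1)}\right)\right).
\end{align}
A triangle inequality over the $\tau$ terms together with the $\beta$-smoothness in Assumption \ref{assu1}.3 then reduces the problem to bounding the drift $\left\lVert\boldsymbol{w}_v^{(j,t)}-\boldsymbol{w}_v^{(j-1)}\right\rVert$ of the real iterate away from the anchor, since each summand is at most $\beta\left\lVert\boldsymbol{w}_v^{(j,t)}-\boldsymbol{w}_v^{(j-1)}\right\rVert$.

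Next I would bound that drift by unrolling the local SGD recursion once more. Because $\boldsymbol{w}_v^{(j,t)}-\boldsymbol{w}_v^{(j-1)}=-\eta\sum_{s=0}^{t-1}\nabla f_{v,s}(\boldsymbol{w}_v^{(j,s)})$, the triangle inequality gives $\left\lVert\boldsymbol{w}_v^{(j,t)}-\boldsymbol{w}_v^{(j-1)}\right\rVert\le\eta\sum_{s=0}^{t-1}\left\lVert\nabla f_{v,s}(\boldsymbol{w}_v^{(j,s)})\right\rVert$. Taking expectations and invoking the uniform gradient bound in Assumption \ref{assu1}.4, where Jensen's inequality passes from $\mathbb{E}\left\lVert\nabla f_v\right\rVert^2\le g^2$ to $\mathbb{E}\left\lVert\nabla f_v\right\rVert\le g$, yields $\mathbb{E}\left\lVert\boldsymbol{w}_v^{(j,t)}-\boldsymbol{w}_v^{(j-1)}\right\rVert\le\eta t g$. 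Substituting back and summing the arithmetic series $\sum_{t=0}^{\tau-1}t=\tfrac12\tau(\tau-1)$ produces a bound of order $\tfrac12\tau(\tau-1)\eta^2\beta g$, and the stated form $\tfrac12\tau(\tau-1)\eta\beta g$ follows in the small-step regime $\eta\le1$ (consistent with condition~(1) of Proposition \ref{prop1} whenever $\beta\ge1$), which merely relaxes one factor of $\eta$.

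The step I expect to be the main obstacle is the careful handling of the nested expectation, rather than any single inequality. The drift bound is itself a statement in expectation over the stochastic iterates $\boldsymbol{w}_v^{(j,s)}$, and these are correlated across $s$; I would therefore apply the triangle inequality \emph{before} taking expectation so that linearity of expectation turns the summed gradient norms into a sum of terms each individually controlled by Assumption \ref{assu1}.4, avoiding any need to decorrelate the iterates. A secondary point worth flagging explicitly is the modelling assumption that the real and virtual updates share the same mini-batch at each local step $t$; without it the subtraction would leave an extra stochastic-gradient-variance term (already accounted for separately in term $B$ of Lemma \ref{lemma1}), so it is essential to state that this lemma isolates only the deterministic local-drift bias. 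The final constant, including the precise $\eta$ versus $\eta^2$ book-keeping, is then routine.
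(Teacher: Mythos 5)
Your proof is correct and follows essentially the same route as the paper's: compare the stochastic gradient at each drifted local iterate against the one at the common anchor via $\beta$-smoothness, bound the drift $\mathbb{E}\lVert\boldsymbol{w}_v^{(j,t)}-\boldsymbol{w}_v^{(j-1)}\rVert$ by $t\eta g$ using the gradient-norm assumption, and sum the arithmetic series (the paper does this recursively in $t$, you unroll the sums, which is equivalent). Your bookkeeping is in fact tighter than the paper's own proof, which drops the factor $\eta$ in the drift identity $\boldsymbol{w}_v^{(j,t-1)}-\boldsymbol{w}_v^{(j,0)}=-\eta\sum_{u}\nabla f_u(\boldsymbol{w}_v^{(j,u)})$ and thereby lands exactly on $\tfrac12\tau(\tau-1)\eta\beta g$, whereas your careful accounting yields the sharper $\tfrac12\tau(\tau-1)\eta^2\beta g$, from which the stated bound follows whenever $\eta\le 1$, as you note.
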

\begin{proof}
    See Appendix \ref{app4}.
\end{proof}

\subsection{Multi-Level Collective Gradient Divergences}

\begin{theorem}\label{theorem1}
The FC difference of the $j$-th epoch satisfies 
\begin{align}
\mathbb{E}\left[U_j\right] & 
\leq
\frac{1}{2} \tau(\tau-1)\eta \beta g
+\eta\tau  \left(\frac{\sigma}{\sqrt{|\Pi^{(j)}|b}} + \Delta^{(j)}\right).
\end{align}

\end{theorem}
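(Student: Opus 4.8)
The plan is to assemble the bound directly from the three preceding lemmas, since Theorem \ref{theorem1} is precisely the consolidation of the decomposition in Lemma \ref{lemma1} with the individual estimates of its three constituent terms. First I would invoke Lemma \ref{lemma1}, which already splits $\mathbb{E}[U_j]$ into a device-level term (labeled $A$), a sample-level term (labeled $B$), and a local-iteration-bias term, carrying the prefactors $\eta\tau$, $\eta\tau$, and $1$, respectively.

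Next I would substitute each piece in turn. The device-level term $A$ is, by the definition in \eqref{dataerr}, exactly $\Delta^{(j)}$, so $\eta\tau A = \eta\tau\Delta^{(j)}$ with no further work needed; note that this term carries no expectation, because $\nabla f_v(\boldsymbol{w}^{(j-1)})$ and $\nabla F(\boldsymbol{w}^{(j-1)})$ are full-dataset gradients, deterministic once the model and the scheduled set $\Pi^{(j)}$ are fixed. For the sample-level term, I would apply Lemma \ref{lemma2} to replace $\eta\tau\,\mathbb{E}[B]$ by $\eta\tau\cdot\frac{\sigma}{\sqrt{|\Pi^{(j)}|b}}$. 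For the remaining term, Lemma \ref{lemma3} bounds $\mathbb{E}\lVert \boldsymbol{w}_v^{(j)} - \boldsymbol{\hat{w}}_v^{(j,\tau)}\rVert$ by $\frac{1}{2}\tau(\tau-1)\eta\beta g$. Summing the three contributions and factoring $\eta\tau$ out of the device-level and sample-level terms yields the stated inequality.

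The only point requiring care, rather than a genuine obstacle, is the bookkeeping of expectations: Lemma \ref{lemma1} applies the expectation to $U_j$, to $B$, and to the bias term, but not to $A$, so one must confirm that the substitutions respect this structure and that no stray expectation is introduced over the deterministic quantity $\Delta^{(j)}$. Since the substantive analytic work, namely the variance argument yielding the $1/\sqrt{|\Pi^{(j)}|b}$ scaling in Lemma \ref{lemma2} and the smoothness and bounded-gradient arguments behind Lemma \ref{lemma3}, is already discharged, the theorem follows by direct substitution and is essentially a restatement that collects the multi-level divergences into a single bound.
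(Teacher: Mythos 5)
Your proposal is correct and follows exactly the paper's own proof: substituting the definition of $\Delta^{(j)}$ in Eq.~\eqref{dataerr}, the sample-level bound of Lemma~\ref{lemma2}, and the local-iteration-bias bound of Lemma~\ref{lemma3} into the decomposition of Lemma~\ref{lemma1}. Your additional remark on the expectation bookkeeping (no expectation over the deterministic $\Delta^{(j)}$) is a careful and accurate reading of the structure in Eq.~\eqref{CEcomponents}.
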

\begin{proof}
    Taking Eq. \eqref{dataerr}-\eqref{modelerr} into Eq. \eqref{CEcomponents}, we prove the theorem.
\end{proof}
Theorem \ref{theorem1} indicates that when $\tau$ is small, device-level gradient divergence, sampling variance, and local iteration bias together impact the performance of FL. As $\tau$ increases, local iteration bias becomes dominant since it goes up at a quadratic scale. However, when $\tau$ is too large, the FC difference may be larger than the convergence rate of centralized learning as stated in \eqref{fw}. Therefore, $\tau$ should be small, or the training resources may be wasted, and FL may also fail to converge.

\textcolor{black}{Besides, when $\tau$ and $\eta$ are fixed, device scheduling impacts the FC difference mainly by device-level gradient divergence and sampling variance. Since they both originate from \textbf{collective gradient divergence} (CGD), but at different levels, we call them \textbf{multi-level CGDs}. Theorem \ref{theorem1} can be extended to the non-convex loss.} 

\begin{theorem}\label{theo2}
    (Non-convex). If 2)-4) of Assumption \ref{assu1} holds and $\eta\le\frac1{2\beta\tau}$, then the average squared gradients of $F\left(\boldsymbol{w}^{(j)}\right)$ is bounded by
    \begin{align}
    \frac1J\sum_{j=0}^{J-1}\left\lVert\nabla F\left(\boldsymbol{w}^{(j)}\right)\right\rVert^2
    \le &
\frac{\mathbb{E}\left[F\left(\boldsymbol{w}^{(0)}\right)\right] - \mathbb{E}\left[F\left(\boldsymbol{w}^{(J)}\right)\right]}{JK(\eta,\tau)} \\
    &+ \left(\frac1{2\eta\tau}+\beta\right)\frac1{JK(\eta,\tau)}\sum_{j=1}^{J}\mathbb{E}\left[U_j^2\right],
\end{align}
where $K(\eta,\tau)=\left(\frac{\eta\tau}2-\beta\eta^2\tau^2\right)$. Furthermore, $\mathbb{E}\left[U_j^2\right]$ is bounded by
\begin{align}
\mathbb{E}\left[U_j^2\right]
\le&
2\mathbb{E}\left\lVert\sum\limits_{v\in\Pi^{(j+1)}} \alpha_{v}^{(j+1)} \left(\boldsymbol{w}_v^{(j+1)}
-\boldsymbol{\hat{w}}_v^{(j,\tau)}\right)\right\rVert^2
\\&+2\left(\frac{\sigma}{\sqrt{|\Pi^{(j)}|b}} + \Delta^{(j)}\right)^2.
\end{align}
\end{theorem}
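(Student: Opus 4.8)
The plan is to prove the two displayed bounds separately: the first is a standard nonconvex descent argument that converts $\beta$-smoothness into a per-epoch decrease of $F$ governed by $\lVert\nabla F\rVert^2$ and the squared FC difference $U_j^2$, and the second reuses the algebraic decomposition of $U_j$ already established in Lemma~\ref{lemma1}, now squared and in expectation.

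For the first bound, I would apply the descent lemma implied by $\beta$-smoothness (assumption 3 of Assumption~\ref{assu1}) to the transition $\boldsymbol{w}^{(j-1)}\to\boldsymbol{w}^{(j)}$, giving $F(\boldsymbol{w}^{(j)})\le F(\boldsymbol{w}^{(j-1)})+\langle\nabla F(\boldsymbol{w}^{(j-1)}),\boldsymbol{w}^{(j)}-\boldsymbol{w}^{(j-1)}\rangle+\frac{\beta}{2}\lVert\boldsymbol{w}^{(j)}-\boldsymbol{w}^{(j-1)}\rVert^2$. The key substitution is $\boldsymbol{w}^{(j)}-\boldsymbol{w}^{(j-1)}=-\eta\tau\nabla F(\boldsymbol{w}^{(j-1)})+(\boldsymbol{w}^{(j)}-\boldsymbol{v}^{(j)})$, which isolates the exact centralized step plus the FC residual whose norm is $U_j$. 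I would then bound the cross term $\langle\nabla F(\boldsymbol{w}^{(j-1)}),\boldsymbol{w}^{(j)}-\boldsymbol{v}^{(j)}\rangle$ by Young's inequality with parameter $\eta\tau$, and the quadratic term by $\lVert a+b\rVert^2\le 2\lVert a\rVert^2+2\lVert b\rVert^2$. Collecting terms, the coefficient multiplying $\lVert\nabla F(\boldsymbol{w}^{(j-1)})\rVert^2$ is exactly $K(\eta,\tau)=\frac{\eta\tau}{2}-\beta\eta^2\tau^2$, which is nonnegative precisely under the stated condition $\eta\le\frac{1}{2\beta\tau}$, while the coefficient multiplying $U_j^2$ is $\frac{1}{2\eta\tau}+\beta$. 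Taking expectations, telescoping the $F$-values over $j=1,\dots,J$, reindexing the gradient sum to $j=0,\dots,J-1$, and dividing by $JK(\eta,\tau)$ yields the first displayed inequality.

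For the second bound, I would reuse the identity behind Lemma~\ref{lemma1}: $\boldsymbol{w}^{(j)}-\boldsymbol{v}^{(j)}$ decomposes into the collective local-iteration-bias term $\sum_{v\in\Pi^{(j)}}\alpha_v^{(j)}(\boldsymbol{w}_v^{(j)}-\boldsymbol{\hat{w}}_v^{(j,\tau)})$ plus $\eta\tau$ times the sum of the sample-level divergence (term $B$ of Eq.~\eqref{CEcomponents}) and the device-level divergence $\Delta^{(j)}$ of Eq.~\eqref{dataerr}. Applying $\lVert a+b\rVert^2\le 2\lVert a\rVert^2+2\lVert b\rVert^2$ separates the bias collective term from the combined divergence term, producing the two summands displayed in the theorem. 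The divergence part is then controlled by Minkowski's inequality in $L^2$: since $\Delta^{(j)}$ is deterministic given the schedule and the squared sample-level divergence is bounded by $\sigma^2/(|\Pi^{(j)}|b)$ (the variance form underlying Lemma~\ref{lemma2}), the $L^2$ triangle inequality gives $\sqrt{\mathbb{E}\lVert B+\Delta^{(j)}\rVert^2}\le \frac{\sigma}{\sqrt{|\Pi^{(j)}|b}}+\Delta^{(j)}$, and squaring reproduces the second summand.

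The first bound is essentially routine, the only care being the Young's-inequality parameter that reproduces $K(\eta,\tau)$ and the matching sign condition. I expect the main obstacle to be the second bound: one must combine a \emph{random} sample-level divergence with a \emph{deterministic} device-level divergence into a single squared sum, which forces the use of Minkowski's inequality rather than a naive pointwise triangle inequality, and one must read off the squared sample-level bound $\sigma^2/(|\Pi^{(j)}|b)$ directly from the variance bound (assumption 5) before any Jensen step is applied, since the looser $\mathbb{E}\lVert\cdot\rVert$ form stated in Lemma~\ref{lemma2} would not by itself control the squared quantity.
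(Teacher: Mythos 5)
Your proposal is correct and follows essentially the same route as the paper: the same descent-lemma argument with the substitution $\boldsymbol{w}^{(j)}-\boldsymbol{w}^{(j-1)}=\boldsymbol{U}_j-\eta\tau\nabla F(\boldsymbol{w}^{(j-1)})$, Young's inequality with parameter $\eta\tau$, the $\lVert a+b\rVert^2\le2\lVert a\rVert^2+2\lVert b\rVert^2$ split, telescoping, and the same two-part decomposition of $\boldsymbol{U}_j$ for the second bound. Your use of Minkowski's inequality in $L^2$ for the mixed random/deterministic divergence term is just a repackaging of the paper's direct expansion of the squared sum (whose cross term is handled by $\mathbb{E}X\le\sqrt{\mathbb{E}[X^2]}$), and you correctly flag the one genuine subtlety — that the squared variance bound $\sigma^2/(|\Pi^{(j)}|b)$ must be taken from the proof of Lemma~\ref{lemma2} rather than from its stated $\mathbb{E}\lVert\cdot\rVert$ form.
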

\begin{proof}
    See Appendix \ref{app5}.
\end{proof}

Theorem \ref{theo2} proves that the non-convex loss of FL is affected by the square of multi-level CGDs. Therefore, reducing multi-level CGDs leads to better performance for both convex and non-convex FL.

Intuitively, there is a balance between the two CGDs. If we want to minimize sampling variance, we have to schedule as many devices as possible, but that may lead to a high device-level CGD; meanwhile, if we focus on decreasing device-level CGD, some devices may be excluded, thus increasing sampling variance. Therefore, these two CGDs should be simultaneously considered to optimize the performance of FL.

\section{Optimization Problem}
\label{Sec-4}
Based on the convergence bound derived in Section \ref{Sec-3}, we consider optimizing the performance of FL by minimizing the sum of multi-level CGDs. We focus on the device scheduling step. The FDMA system is considered with a communication deadline $d_{\text{cm}}$ and total bandwidth $B$ as stated in Section \ref{Sec-2}. For the $v$-th device, the scheduling variable $x_v$ and the corresponding bandwidth allocation variable $B_v$ are decided. The problem is formulated as
\begin{subequations}
\begin{align}
&\min_{\{x_{v},B_{v}\}} \frac{\sigma}{\sqrt{\sum\limits_{v\in\mathcal{V}}x_{v}b}} + \Delta\\
&s.t. \quad x_{v} \in\{0,1\}, v\in\mathcal{V},\label{p0a}\\
&\quad\quad \frac{x_{v}D_{\boldsymbol{w}}}{B_{v}\log_2\left(1+\frac{S\Bar{H}_{v, j}}{B_{v}N_0}\right)}\le 
d_{\text{cm}},v\in\mathcal{V},\label{p0b}\\
&\quad\quad \sum_{v\in\mathcal{V}} x_{v}B_{v}\le B,\label{p0c}\\
&\quad\quad B_v\ge 0, v\in\mathcal{V}.\label{p0d}
\end{align}
\end{subequations}
Here, the superscript $j$ is removed, since there are no correlations across rounds.

The key challenge of the problem is that $\Delta$ is not available. According to the definition, $\Delta$ is composed of the device gradient and the global gradient. For the device gradient, the edge server gets access to it only when the device uploads its model, but if all devices upload models, the resource is already utilized, so scheduling is not necessary. For the global gradient, it is only accessible when the edge server holds the global dataset, which is impossible in FL. 

To combat the challenge, we consider a specific scenario of the learning task.

\subsection{Objective Transformation for Classification Problem}
The loss function of the classification problem satisfies the following properties:
\begin{align}
    F\left(\boldsymbol{w}\right)&=\sum\limits_{c=1}^{C}p_c\mathbb{E}_{\boldsymbol{x}|y\!=\!c}[\log g_i\left(\boldsymbol{w}\right)],
\end{align}
where $C$ is the number of classes, and $p_c$ denotes the portion of the samples belonging to class $c$ over the dataset. Denote the device distribution by $\boldsymbol{p}_{v}^{[j]}=\left[p_{v,1}^{[j]},\cdots,p_{v,c}^{[j]},\cdots,p_{v,C}^{[j]}\right]$ and the global distribution by $\boldsymbol{p}=[p_{1},\cdots,p_{c},\cdots,p_{C}]$, separately.

Assume $|\mathcal{D}_{v}|$ are equal for any $v$, so that $\alpha_v=\frac{1}{\sum\limits_{v \in \mathcal{V}} x_{v}}$. Then we can bound the data heterogeneity by 
\begin{align}
    \Delta\le
    &\left\lVert \sum_{c=1}^C \left(\frac{\sum\limits_{v \in \mathcal{V}} x_{v} p_{v, c}}{\sum\limits_{v \in \mathcal{V}} x_{v}}-p_c\right)\nabla_{\boldsymbol{w}}\mathbb{E}_{\boldsymbol{x}|y=c}[\log g_i(\boldsymbol{w})] \right\rVert\\
\le&  
\sum_{c=1}^C \left|\frac{\sum\limits_{v \in \mathcal{V}} x_{v} p_{v, c}}{\sum\limits_{v \in \mathcal{V}} x_{v}}-p_c\right|
 G_c\label{G1}, \\
\end{align}
where $G_c=\left\lVert\nabla_{\boldsymbol{w}}\mathbb{E}_{\boldsymbol{x}|y=c}[\log g_i\left(\boldsymbol{w}\right)]\right\rVert
$ denotes the expectation of class gradients. Eq. \eqref{G1} indicates that for a classification problem, device-level CGD is bounded by the weighted earth moving distance (WEMD) between the collective data distribution of the scheduled devices and global data distribution. So the optimization problem becomes
\begin{align}
\textbf{P0}: &\min_{\{x_{v},B_{v}\}} \frac{\sigma}{\sqrt{\sum\limits_{v\in\mathcal{V}}x_{v}b}} + \sum_{c=1}^C\left| 
\frac{\sum\limits_{v \in \mathcal{V}} x_{v} p_{v, c}}{\sum\limits_{v \in \mathcal{V}} x_{v}}-p_c \right| G_c\\
&s.t. \quad \text{constraints \eqref{p0a},\eqref{p0b},\eqref{p0c},\eqref{p0d}}.
\end{align}

The minimum bandwidth for each scheduled device in \textbf{P0} is calculated by \cite{shi_joint_2020}
\begin{align}
    B^*_{v}=-\frac{D_{\boldsymbol{w}}\ln 2}{d_{\text{cm}}\left(W(-\Gamma_{v} e^{-\Gamma_{v}})+\Gamma_{v}\right)},\label{lambert}
\end{align}
where $W(\cdot)$ is the Lambert-W function, and $\Gamma_{v}=\frac{N_0 D_{\boldsymbol{w}}\ln2}{d_{\text{cm}}S\Bar{H}_{v}}$. Any feasible bandwidth $B_{v}$ with $x_v=1$ should satisfy $B_{v}>B^*_{v}$ because the LHS of constraint \eqref{p0b} monotonically decreases as $B_{v}$ increases.

Therefore, we can let constraint \eqref{p0b} be strict when $x_v=1$ without impacting the optimal value. 
Then we reformulate the problem as
\begin{align}
&\textbf{P1}: \min_{\{x_{v}\}} \frac{\sigma}{\sqrt{\sum\limits_{v\in\mathcal{V}}x_{v}b}} + \sum_{c=1}^C\left| 
\frac{\sum\limits_{v \in \mathcal{V}} x_{v} p_{v, c}}{\sum\limits_{v \in \mathcal{V}} x_{v}}-p_c \right| G_c\\
&s.t. \quad \text{constraint \eqref{p0a}},\\
&\quad\quad \sum_{v\in\mathcal{V}} x_{v}B^*_{v}\le B,\\
&\quad\quad x_v B_v^*\ge 0, v\in\mathcal{V}.\\
\end{align}
When the channel condition is very bad, even if the bandwidth goes to infinity, the model can not be uploaded in time, and Eq. \eqref{lambert} produces a minus value. Therefore, the last constraint is necessary to prevent $x_v$ from taking 1 when $B_v^*$ is minus.

\textbf{P1} is different from former device scheduling problems because of the collective property. For example, when $\boldsymbol{p}_1=\boldsymbol{p}_2=[0.51,0.49],\boldsymbol{p}_3=[0.8,0.2],\boldsymbol{p}_4=[0.2,0.8]$ and $\boldsymbol{p}=[0.5,0.5]$, we tend to choose device 3 and 4 instead of device 1 and 2. Even though device 3 and 4 are “bad” individual devices, when they are grouped, they form a better device group. 

\subsection{Algorithms for \textbf{P1}}
We first prove the NP-hardness of \textbf{P1}.

\begin{lemma}\label{lemma4}
The partition problem is reducible to \textbf{P1} in polynomial time.
\end{lemma}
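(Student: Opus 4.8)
The plan is to establish the NP-hardness of \textbf{P1} by exhibiting a polynomial-time many-one reduction from the (weakly NP-complete) \emph{partition problem}: given positive integers $a_1,\dots,a_n$ with $\Sigma\triangleq\sum_i a_i$, decide whether some $T\subseteq\{1,\dots,n\}$ satisfies $\sum_{i\in T}a_i=\Sigma/2$. I would target the decision version of \textbf{P1}, namely ``is the optimal objective at most $\theta$?'', and build an instance whose WEMD term can be driven to \emph{exactly zero} if and only if the partition instance is a YES-instance. If $\Sigma$ is odd the answer is trivially NO, so I assume $\Sigma$ is even throughout.

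First I would fix the construction. Take $C=2$ classes, global distribution $\boldsymbol p=[\tfrac12,\tfrac12]$, and $G_1=G_2=1$. Introduce $n$ ``element'' devices, where device $i$ carries $p_{i,1}=\tfrac12+\tfrac{a_i}{2\Sigma}$ (hence $p_{i,2}=\tfrac12-\tfrac{a_i}{2\Sigma}$), together with one ``offset'' device with $p_{0,1}=\tfrac14$. These probabilities lie in $[0,1]$ and have polynomially bounded bit-length, so the map is computable in polynomial time. I set every $B^*_v=1$ and the budget $B=n+1$, so the bandwidth constraints are slack for every subset and never restrict the selection. Writing $c_v\triangleq 2p_{v,1}-1$, the two class terms of the WEMD coincide, and a short calculation gives the identity
\begin{align}
\sum_{c=1}^C\Bigl|\tfrac{\sum_v x_v p_{v,c}}{\sum_v x_v}-p_c\Bigr|G_c=\frac{1}{\sum_v x_v}\Bigl|\sum_v x_v c_v\Bigr|,
\end{align}
with $c_i=a_i/\Sigma$ for the element devices and $c_0=-\tfrac12$ for the offset device. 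Since we only ask whether this quantity is \emph{zero}, the normalizing factor $1/\sum_v x_v$ is irrelevant to achievability: the WEMD vanishes exactly when $\sum_v x_v c_v=0$.

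Next I would verify the equivalence. Selecting the offset device together with a subset $T$ of element devices yields $\sum_v x_v c_v=\tfrac{1}{\Sigma}\bigl(\sum_{i\in T}a_i-\tfrac\Sigma2\bigr)$, which vanishes precisely when $\sum_{i\in T}a_i=\Sigma/2$; hence a zero-WEMD selection exists iff the partition instance is YES. For the gap on NO-instances, integrality of the $a_i$ and evenness of $\Sigma$ force $\bigl|\sum_{i\in T}a_i-\tfrac\Sigma2\bigr|\ge1$, and omitting the offset device instead gives $\sum_v x_v c_v\ge a_{\min}/\Sigma\ge 1/\Sigma$ for any nonempty $T$; thus every nonempty selection obeys $\mathrm{WEMD}(x)\ge\mu\triangleq\tfrac{1}{(n+1)\Sigma}$. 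The final step is to neutralize the sampling-variance term: choosing $b=1$ and $\sigma=\mu/3$ makes $0<\tfrac{\sigma}{\sqrt{\sum_v x_v b}}\le\sigma<\mu/2$ for every nonempty selection. With threshold $\theta=\mu/2$, a YES-instance admits a selection of objective $0+\tfrac{\sigma}{\sqrt{|T|+1}}<\mu/2$, whereas every selection in a NO-instance has objective at least $\mu>\mu/2$; therefore partition is YES iff the optimum of \textbf{P1} is at most $\theta$.

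The hard part is precisely the coupling created by the normalization $1/\sum_v x_v$ and the variance term, which is what separates \textbf{P1} from a plain subset-sum statement. The two ideas that resolve it are (i) anchoring the reduction at $\mathrm{WEMD}=0$, where the denominator cancels so that feasibility of a perfect partition becomes insensitive to how many devices are scheduled, and (ii) exploiting the integrality gap $\mu$ of the partition instance to pick $\sigma$ small enough that the monotone, selection-dependent variance term can never bridge the YES/NO gap. I would also check that $\mu$, and hence $\sigma=\tfrac{1}{3(n+1)\Sigma}$, has polynomially bounded encoding: although $\Sigma$ may be exponential in value, its \emph{bit-length} is part of the input, so $\sigma$ is representable in polynomial space and the reduction runs in polynomial time.
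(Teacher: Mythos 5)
Your proof is correct, but it takes a genuinely different route from the paper's. The paper reduces a partition instance to a \emph{family} of $\lceil S/2\rceil$ instances of \textbf{P1}, one for each candidate subset size $s$: it sets $C=1$, uses the bandwidth budget ($B_v = B/s$) to cap the number of scheduled devices at $s$, and imposes the condition $\frac{\sigma}{\sqrt{b}G_c}(\frac{1}{\sqrt{s-1}}-\frac{1}{\sqrt{s}})\ge \frac{c_{\text{sum}}}{2s}$ to make $\sigma$ \emph{large} enough that scheduling exactly $s$ devices is always optimal; the normalization $1/\sum_v x_v$ then becomes a harmless constant $1/s$, and zero WEMD certifies a size-$s$ partition. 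You instead give a single-instance many-one reduction: you keep the bandwidth constraint slack, neutralize the cardinality normalization by anchoring the YES-case at WEMD $=0$ (where the denominator cancels), encode the target $\Sigma/2$ with an offset device in a genuine two-class distribution, and make $\sigma$ \emph{tiny} ($\mu/3$, below the integrality gap $\mu$ of the partition instance) so the variance term can never flip a NO into a YES at the threshold $\theta=\mu/2$. Your approach buys a cleaner and formally stronger reduction (one Karp-style call rather than $O(S)$ oracle calls), legitimate probability vectors (the paper's $C=1$ construction with $p_{v,0}=r_v$ abuses the distribution notation), and an explicit decision-version gap argument; the paper's approach buys simpler per-instance algebra and directly exhibits how \textbf{P1}'s optimizer is forced to a fixed cardinality, which mirrors how its scheduling algorithms actually behave. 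Both correctly establish NP-hardness; the interesting contrast is that the two constructions exploit the variance term in opposite regimes (large $\sigma$ to force maximal scheduling versus small $\sigma$ to make scheduling count irrelevant).
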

\begin{proof}
    The partition problem searches for a way to divide a set of integers $r_1,\cdots,r_S$ into two subsets with the same sum, where $S$ denotes the total number of candidate integers. 
    
    Denote the sum of these integers by $r_{\text{sum}}$, and the problem can be divided into subproblems to verify the existence of a subset $\mathcal{R}$, such that
    \begin{align}        \sum\limits_{r_i\in \mathcal{R}}r_i=\frac{r_{\text{sum}}}2, |\mathcal{R}|=s,
    \end{align}
    where $s$ takes values from 1 to $\lceil \frac{S}2\rceil$.

    Meanwhile, each sub-problem is corresponding to \textbf{P1} if we set $C=1,|\mathcal{V}|=S,  p_{v,0}=r_v, p_0=\frac{c_{\text{sum}}}{2s},B_v=\frac Bs$, and proper $\sigma,b,G_c$ such that $\frac{\sigma}{\sqrt{b}G_c}(\frac1{\sqrt{s-1}}-\frac1{\sqrt{s}})\ge \frac{c_{\text{sum}}}{2s}$, which is proved in Appendix \ref{app6}. Therefore, the partition problem is reducible to problem \textbf{P1} in $O(|\mathcal{V}|)$ time.
\end{proof}

The partition problem has been proved to be NP-complete, and \textbf{P1} is not NP since an optimal solution can not be verified in polynomial time, so \textbf{P1} is NP-hard.

Next, we propose two algorithms to solve \textbf{P1}.

\subsubsection{Greedy Scheduling}
We first propose a heuristic algorithm. Denote the scheduled set by $\Pi$ and initialize with $\Pi=\emptyset$. Let $W(\Pi)=\sum_{c=1}^C\left| 
\frac{\sum\limits_{v \in \Pi} x_{v} p_{v, c}}{\sum\limits_{v \in \Pi} x_{v}}-p_c \right|G_c$ be the WEMD of $\Pi$. 
We iteratively choose $v_k$ by

\begin{align}
    v_k = \argmax_v W(\Pi)-W(\Pi\cup v).
\end{align}

If $v_k$ satisfies $W(\Pi)-W(\Pi\cup v_k)+\frac{\sigma}{\sqrt{b}}(\frac1{\sqrt{|\Pi|}}-\frac1{\sqrt{|\Pi+1|}})\ge0$, meaning that adding $v_k$ reduces the objective, we add it to $\Pi$; else, we stop the iteration. Since the value of $W(\Pi\cup v)$ needs to be recalculated at each iteration, the complexity of the greedy algorithm is $O(|\mathcal{V}|^2).$

\begin{algorithm}[t]
\caption{Greedy Scheduling (GS) Algorithm}
\label{alg1}
\begin{algorithmic}[1] 
\Require $\sigma,G_c$
    \State Initialize $\Pi=\Phi, \mathcal{S}=\mathcal{V}$.
    \While{$|\mathcal{S}|>0$}
        \For{$v \in \mathcal{S}$}
            \State Calculate $W(\Pi\cup v_k)$. 
        \EndFor
        \State Greedily schedule $v_k = \argmax_v W(\Pi)-W(\Pi\cup v_k)$.
        \If{$W(\Pi)-W(\Pi\cup v_k)+\frac{\sigma}{\sqrt{b}}(\frac1{\sqrt{|\Pi|}}-\frac1{\sqrt{|\Pi+1|}})\ge0$}
            \State Update $\Pi\gets\Pi\cup v_k, \mathcal{S}\gets\mathcal{S}\setminus v_k$.
        \Else
            \State Break.
        \EndIf
    \EndWhile
    \State\Return $\Pi$
\end{algorithmic}
\end{algorithm}

\subsubsection{Coordinate Descent}
We also use a fix-sum coordinate descent (FSCD) algorithm to get a proximately optimal solution. The coordinate descent algorithm \cite{nan_robust_2025}
first randomly initialize $\boldsymbol{x}_0$ with each element taking values of 0 or 1, then repeat to enumerate all 1-step neighbors of $\boldsymbol{x}_k$ as 
\begin{align}
    \boldsymbol{x}_{k,v} = [x_{k,1},\cdots,1-x_{k,v},\cdots,x_{k,|\mathcal{V}|}].
\end{align}
Then the one with the minimum objective value is chosen by
\begin{align}
    \boldsymbol{x}_{k+1}=\argmax
    _{v}W(\boldsymbol{x}_{k,v}).
\end{align}

The iteration is stopped when no 1-step neighbor leads to a decrease of the objective, i.e., $\boldsymbol{x}_{k+1}=\boldsymbol{x}_{k}$. The time complexity of the algorithm is $O(t|\mathcal{V}|)$, where $t$ is the maximum iteration step.

In our problem, if the current iteration point and the global optimal point have the same schedule number, coordinate descent takes at least two steps to transit to the global optimal point: reverse a ``1" to ``0", and another ``0" to ``1", or reverse a ``0" to ``1" first, then another ``1" to ``0".
However, this introduces extra risk: if a ``1" is reversed to ``0", sampling variance increases, so less WEMD is needed to make the transit happens; if a ``0" is reversed to ``1", the total bandwidth consumption increases, and thus the bandwidth constraint may be violated. 

\begin{algorithm}[t]
\caption{Fix-Sum Coordinate Descent (FSCD) Algorithm}
\label{alg2}
\begin{algorithmic}[1] 
\Require $\sigma,G_c$
    \For{$S = V:-1:0$}
        \State Initialize schedule array $\boldsymbol{x}^{(S)}_{0}$ where $S$ devices with least bandwidth requirements are set to 1.
        \While{True}
            \State $\boldsymbol{x}^{(S)}_{k+1}=\argmax
    _{v}\boldsymbol{x}^{(S)}_{k,v,u}$.
            \If{$\boldsymbol{x}^{(S)}_{k+1}=\boldsymbol{x}^{(S)}_{k}$}
                \State $\boldsymbol{x}^{(S)}\gets\boldsymbol{x}^{(S)}_{k}$.
                \State Break.
            \EndIf
        \EndWhile
        \If{$W(\Pi) + \frac{\sigma}{\sqrt{Sb}} \le \frac{\sigma}{\sqrt{(S-1)b}}$}
            \State Break.
        \EndIf\Comment{Early Exit}
    \EndFor
    \State\Return $\boldsymbol{x}_{\text{FSCD}}=\argmax_{\boldsymbol{x}^{(S)}}\left[f(\boldsymbol{x}^{(S)})+ \frac{\sigma}{\sqrt{Sb}}\right]$
\end{algorithmic}
\end{algorithm}

\begin{algorithm}[t]
\caption{FedCGD Scheduling Algorithm}
\label{alg3}
\begin{algorithmic}[1] 
\State Initialize $\boldsymbol{w}^{(0)}$
    \For{$j=1,2,\cdots,J$}
        \State The edge server broadcasts model $\boldsymbol{w}^{(j)}$ to all available devices $\mathcal{V}^{(j)}$.
        \For{device $v \in \mathcal{V}^{(j)}$}
            \State Receive $\boldsymbol{w}^{(j)}$ and update according to \eqref{localupdate}.
            \State Estimate $\hat{\sigma}_v^{(j)}$ according to \eqref{local_est_sigma}.
            \State Calculate $\boldsymbol{p}_{v}^{[j]}$ over the sampled data.
            \State Send $\hat{\sigma}_v^{(j)},\boldsymbol{p}_{v}^{[j]}$ to the edge server.
        \EndFor
        \State Receive $\hat{\sigma}_v^{(j)},\boldsymbol{p}_{v}^{[j]}$ from all devices.
        \State Calculate $\hat{\sigma}^{(j)}$ according to \eqref{global_est_sigma}.
        \State Call Alg. \ref{alg1} or Alg. \ref{alg2} with $\hat{\sigma}^{(j)},\hat{G}^{(j-1)}$ to get the scheduling policy $\Pi^{(j)}$.
        \For{device $v \in \Pi^{(j)}$}
            \State Send $\boldsymbol{g}_v^{(j)}=\boldsymbol{w}_v^{(j+1)}-\boldsymbol{w}_v^{(j)}$ to the edge server.
        \EndFor
        \State Calculate $\boldsymbol{w}^{(j+1)}$ by \eqref{edgeagg}.
        \State Estimate $\hat{G}^{(j)}$ by \eqref{G_est}.
    \EndFor
\end{algorithmic}
\end{algorithm}

Therefore, to make it easier for the points with the same schedule number to transit, we fix the schedule number to $S$. To initialize, devices with $S$ devices with the least bandwidth requirements are set to ``1", while others are set to ``0". In each iteration, instead of considering 1-step neighbors, we consider all 1-step transit, which is denoted by
\begin{align}
    \boldsymbol{x}^{(S)}_{k,v,u} = [x^{(S)}_{k,1},\cdots,1-x^{(S)}_{k,v},\cdots,1-x^{(S)}_{k,u},\cdots,x^{(S)}_{k,|\mathcal{V}|}],&\\
     v\in\Pi, u\in \mathcal{V}\setminus\Pi.&
\end{align}
Then we update $\boldsymbol{x}^{(S)}_{k}$ by $\boldsymbol{x}^{(S)}_{k+1}=\argmax
    _{v}\boldsymbol{x}^{(S)}_{k,v,u}$ until no 1-step transit makes the objective decrease. We obtain the result by enumerating $S$ from $|\mathcal{V}|$ to 1 and choosing the best decision by $\boldsymbol{x}_{\text{FSCD}}=\argmax_S\boldsymbol{x}^{(S)}$.

During the iteration of FSCD, an early exit scheme can be adopted, i.e., if the local optimal point $\boldsymbol{x}^{(S)}$ satisfies
\begin{align}
    W(\Pi) + \frac{\sigma}{\sqrt{Sb}} \le \frac{\sigma}{\sqrt{(S-1)b}},
\end{align}
then the iteration stops, because the schedule decisions with scheduled devices fewer than $S$ can not get a smaller objective. Since the space of 1-step transit is $O(|\mathcal{V}|^2)$, the complexity of the FSCD algorithm is $O(t|\mathcal{V}|^2)$, where $t$ is the total iteration steps.

\subsection{Parameter Estimation}
The precise values of $\sigma$ and $G_c$ in \textbf{P1} can not be obtained in simulation, because the loss functions over the whole device dataset and over the global dataset are not calculated during one round. Therefore, we have to estimate them.

Firstly, $\sigma$ can be estimated by the variance of gradients over the first batch
\begin{align}
    &\hat{\sigma}_v^{(j)} = \sqrt{\frac1b \sum\limits_{i=1}^b\left[\left\lVert\nabla l_{v,j,0,i}(\boldsymbol{w}^{(j)})\!-\!\nabla f_{v,j,0}(\boldsymbol{w}^{(j)})\right\rVert^2\right]},\label{local_est_sigma}\\
    &\hat{\sigma}^{(j)} = \sqrt{\sum\limits_{v \in \mathcal{V}^{(j)}}\alpha_{v}^{(j)}(\hat{\sigma}_v^{(j)})^2}.\label{global_est_sigma}
\end{align}

Secondly, inspired by Ref. \cite{shi_joint_2020}, the gradients of devices and the global gradient can be estimated by multiple local updates of mini-batch SGD as follows
\begin{align}
    & \nabla \hat{f}_v\left(\boldsymbol{w}^{(j)} \right)=
    \frac{\boldsymbol{w}_v^{(j+1)}-\boldsymbol{w}_v^{(j)}}{\tau \eta}, \\
    & \nabla\hat{F}\left(\boldsymbol{w}^{(j)} \right)=
    \sum_{v \in \mathcal{V}^{(j)}}\alpha_{v}^{(j)}
    \nabla f_v\left(\boldsymbol{w}_v^{(j)} \right).
\end{align}

If each device has one class of data, denote the scheduled devices with class $c$ as $\Pi_c^{(j)}$, and $G_c^{(j)}$ is estimated by
\begin{align}
    \hat{G}_c^{(j)} = \max\limits_{v\in\Pi_c}\frac{\left\lVert\hat{f}_v\left(\boldsymbol{w}^{(j)} \right)-\hat{F}\left(\boldsymbol{w}^{(j)}\right)\right\rVert}{\lVert\boldsymbol{p}_{v}^{[j]}-\boldsymbol{p}\rVert_1}.\label{Gc_est}
\end{align}
If devices have more than one class of data, $G_c^{(j)}$ is hard to derive. We denote $G^{(j)}=\max_c G_c^{(j)}$, and estimated $G^{(j)}$ by

\begin{align}
    \hat{G}^{(j)} = \max\limits_v\frac{\left\lVert\hat{f}_v\left(\boldsymbol{w}^{(j)} \right)-\hat{F}\left(\boldsymbol{w}^{(j)}\right)\right\rVert}{\lVert\boldsymbol{p}_{v}^{[j]}-\boldsymbol{p}\rVert_1}.\label{G_est}
\end{align}

The overall procedure of this Section is summarized in Algorithm \ref{alg3}, \textcolor{black}{named Federated Averaging with minimizing CGD (FedCGD)}.
\section{Simulation Results}
\label{Sec-5}

In this section, simulations on the optimization problem are conducted first to show the effectiveness of the proposed algorithms, then FL training simulations are adopted to verify the performance of the proposed method. 

\subsection{Simulation Settings} 
We consider a wireless FL system on the cellular network, where $V=64$ devices are located around an edge server in a cell of radius $250$ m. Since devices have limited computing capacity and may have to finish other tasks, we assume that each of them has a probability $p_{\text{a}}$ to be available for FL and finish the local update before the deadline. We set the available probability to $p_{\text{a}}=0.3$ for each device in our simulation. Only the uplink channel is considered since we focus on the aggregation process. We adopt the channel for 6 GHz in the UMi-Street Canyon scenario proposed by TR 38.901 specification of 3GPP \cite{3gpp_2024} with total bandwidth $B=20$ MHz. The path loss of LOS and NLOS channels are denoted by
\begin{align}
    PL_{\text{LOS}}=32.4+21\log_{10}(d_{\text{3d}})+20\log_{10}(f),\\
    PL_{\text{NLOS}}=32.4+31.9\log_{10}(d_{\text{3d}})+20\log_{10}(f),
\end{align}
where $d_{\text{3d}}$ is the 3d distance between the device and the edge server, and $f$ is the carrier frequency. The standard deviation of shadow fading of LOS and NLOS channels is $4$ dB and $8.2$ dB, respectively. Fast fading is not considered since we focus on the average communication rate during the upload deadline $d$. The LOS probability is calculated by 
\begin{align}
    Pr_{\text{LOS}}=\frac{18}{d_{\text{2d}}}+\exp\left[\left(-\frac{d_{\text{2d}}}{36}\right)\left(1-\frac{18}{d_{\text{2d}}}\right)\right],
\end{align}
where $d_{\text{2d}}$ is the 2d distance between the device and the edge server.

The transmit power of devices is set to $23$ dBm, and the noise figure of the receiver at the edge side is set to $6$ dB. All channel parameters are listed in Table \ref{table1}. 

\begin{table}[t]
\centering
\caption{Summary of Channel Parameters}
  \label{table1}
  \renewcommand\arraystretch{1.1}
\begin{tabular}{|c|c|} 
\hline 
\textbf{Parameters} & \textbf{Values} \\ \hline 
Channel Bandwidth & 20MHz \\ \hline
Transmission Power & 23dBm \\ \hline
Carrier Frequency  & 3.5GHz \\ \hline
Device/Edge Server Antenna height & 1.5m/10m \\ \hline
LOS/NLOS Shadow Fading Std. Dev. & 4dB/8.2dB \\ \hline
Noise Power Spectral Density & -174dbm/Hz \\ \hline
Receiver Noise Figure & 6dB \\ \hline
Upload Deadline & 2s/120s \\ \hline
\end{tabular}
\end{table}

FL simulations are conducted on the CIFAR-10 dataset and CIFAR-100 dataset\footnote{https://www.cs.toronto.edu/~kriz/cifar.html} for the image classification task. Both of them have 50000 training samples and 10000 testing samples in total. CIFAR-10 has 10 classes, with 5000 training samples and 1000 test samples for each class; CIFAR-100 has 100 classes, with 500 training samples and 100 test samples for each class. Two types of data division strategies are considered:

(1) Sort and partition: the dataset is sorted by label, partitioned into shards, and allocated to devices. Each device gets $l$ shards, and smaller $l$ statistically indicates larger data heterogeneity. Besides, as the devices participating in FL training may not be representative enough, we assume that the total dataset of the participating devices may be biased \cite{zhang_fed-cbs_2023}. Let the total dataset contain $n_1$ samples of the first half classes and $n_2$ samples of the second half classes. Denote the imbalance ratio as $r=\frac{n2}{n1}$. When $r=1$, the total dataset is balanced; when $r\neq1$, the total dataset is imbalanced. In the simulation, we consider the cases of $r=1,3,9$.

\begin{figure}[!t]
	\centering
	\subfigure[Average iterations.]{\label{fig18}			
		\includegraphics[width=0.22\textwidth]{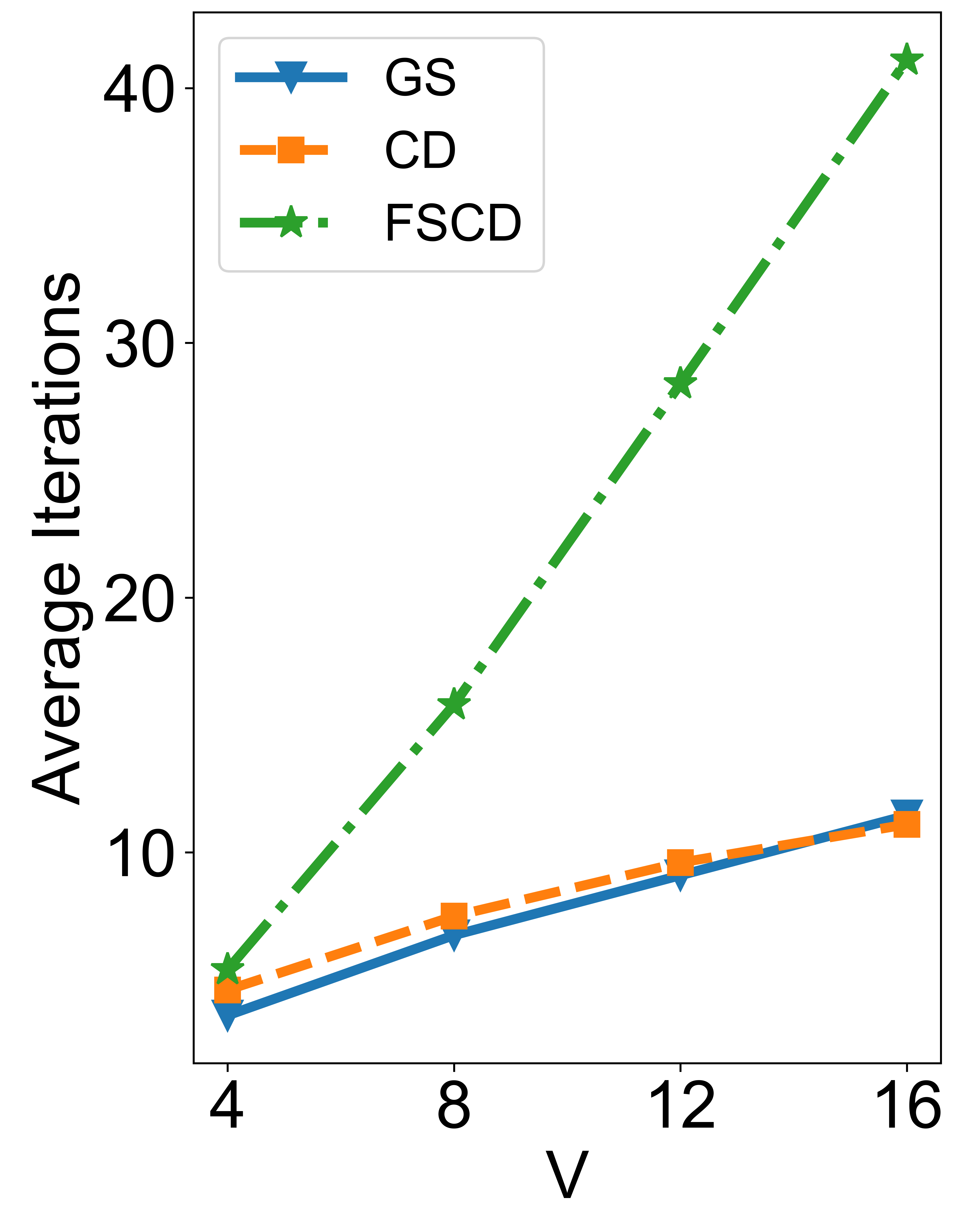} }	
	\subfigure[Average relative error.]{\label{fig19}	
		\includegraphics[width=0.22\textwidth]{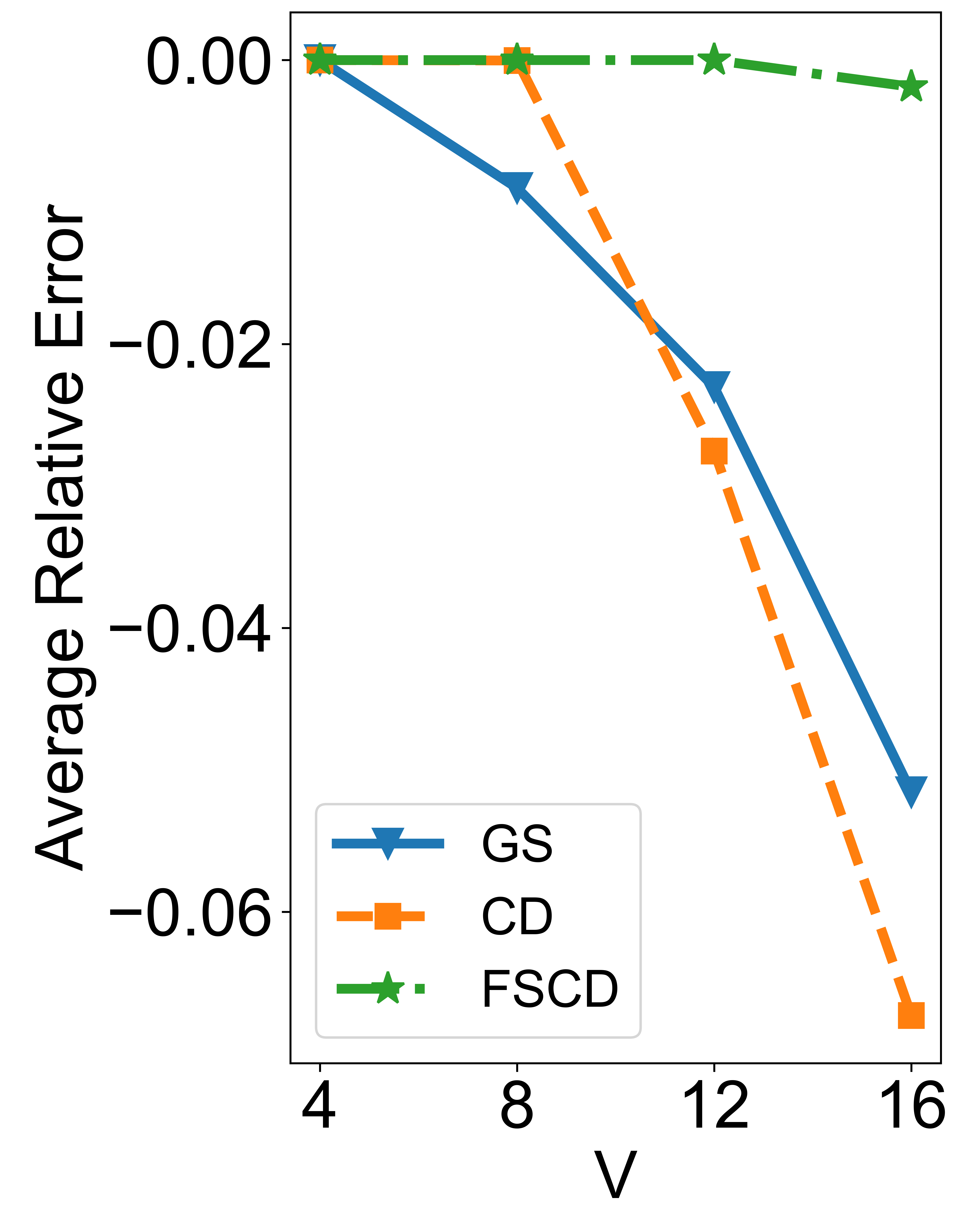} }
	\caption{Performance of proposed algorithms.}
	\label{fig18-19}
\end{figure}

(2) Dirichlet distribution\cite{hsu_measuring_2019}: for each device, the label follows Dirichlet distribution $\boldsymbol{p}_v\sim \text{\textbf{Dir}}(\alpha\boldsymbol{p})$, where $\alpha$ is the concentration parameter. A larger $\alpha$ means the distribution of devices is more similar to each other, and also to the global distribution $\boldsymbol{p}$. Each device is assumed to hold the same number of data samples. Due to the randomness of the choice $\boldsymbol{p}_v$, the total dataset of the Dirichlet distribution is naturally imbalanced.

For CIFAR-10, a convolutional neural network (CNN) is adopted with the following architecture: the first convolutional block consists of two $3\times3$ convolution layers with 32 channels, ReLU activation, followed by a $2\times2$ max pooling layer and 0.2 dropout; the second convolutional block contains two $3\times3$ convolution layers with 64 channels, ReLU activation, followed by a $2\times2$ max pooling layer and 0.3 dropout; a fully connected layer with 120 units and ReLU activation; and a final 10-unit softmax layer. The network takes an input of size $20\times3\times32\times32$ and produces an output of size $20\times10$. 

For CIFAR-100, ResNet-18\cite{he2016deep} is adopted. All the batch normalization layer in ResNet-18 is replaced by the group normalization layer to enable FL on heterogeneous data to converge \cite{wang2023batch}. The transmission deadlines for CIFAR-10 and CIFAR-100 are set to 2 s and 120 s, respectively. For both datasets, the learning rate is set to $\eta=0.1$ and the batch size is set to $b=32$. 

We compare our algorithm with four baselines:

(1) Best channel (BC)\cite{amiri_convergence_2021}: sort the devices by the channel gains in descending order, and do a best-effort schedule, i.e., schedule devices one by one until the channel bandwidth is not enough for the next device.

(2) Best norm (BN2)\cite{amiri_convergence_2021}: sort the devices by the gradient norms in descending order, and do a best-effort schedule.

(3) Power-of-choice (POC)\cite{cho_towards_2022}: randomly sample $V'$ devices, sort them by the accumulated loss in descending order, and do a best-effort schedule among the sampled devices.

(4) Fed-CBS\cite{zhang_fed-cbs_2023}: iteratively calculate the combinatorial upper confidence bounds of devices w.r.t the QCID value as the scheduling probability, and schedule according to the probability until the channel bandwidth is not enough for the next device.

For our method, we consider three variations:

(1) FedCGD-FSCD-Gc: estimate $G_c$ by class (only feasible when $l=1$), and use FSCD algorithm.

(2) FedCGD-FSCD: estimate $G$ and use FSCD algorithm.

(3) FedCGD-GS: estimate $G$ and use greedy scheduling.

\subsection{Performance of scheduling algorithms}
We first evaluate the performance of the proposed algorithms on the CIFAR-10 dataset. The data division follows the Dirichlet distribution, and the UMi-Street Canyon channel is adopted.

Fig. \ref{fig18-19} compares the performance of the aforementioned algorithms for P1, where the CD algorithm serves as a baseline. Results show that average iteration times increase almost linearly with the number of devices. The greedy scheduling uses iterations fewer than the number of devices to achieve an acceptable relative error of up to 5.16\%. When $V=16$, its relative error is even less than the CD method. The FSCD method reports a fantastic performance with a relative error 0.19\%, while not using too many iterations.

\begin{figure}[!t]
	\centering
	\subfigure[$\tau=1$.]{\label{fig1}		
	\includegraphics[width=0.45\textwidth]{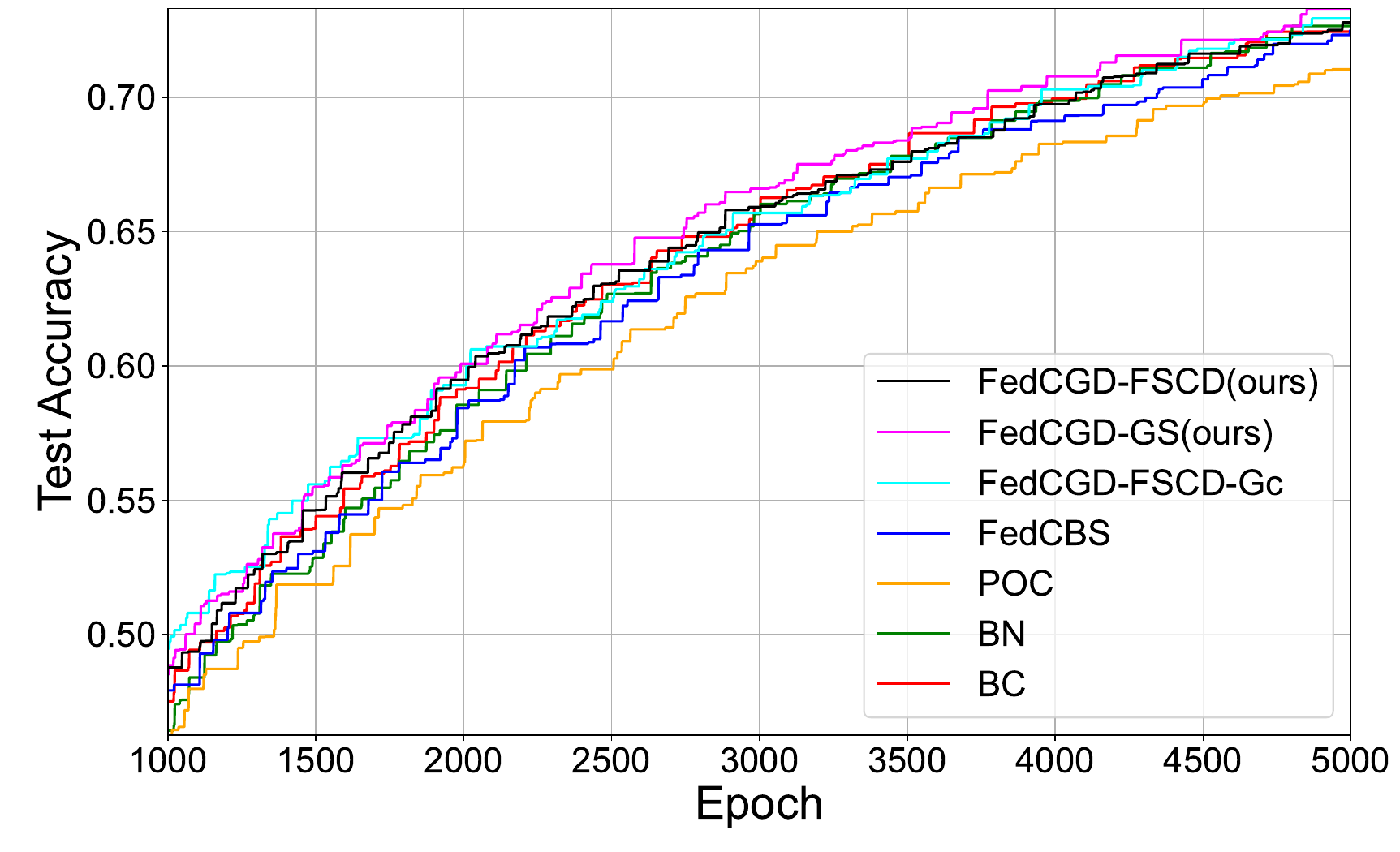} }	
	\subfigure[$\tau=5$.]{\label{fig2}	
	\includegraphics[width=0.45\textwidth]{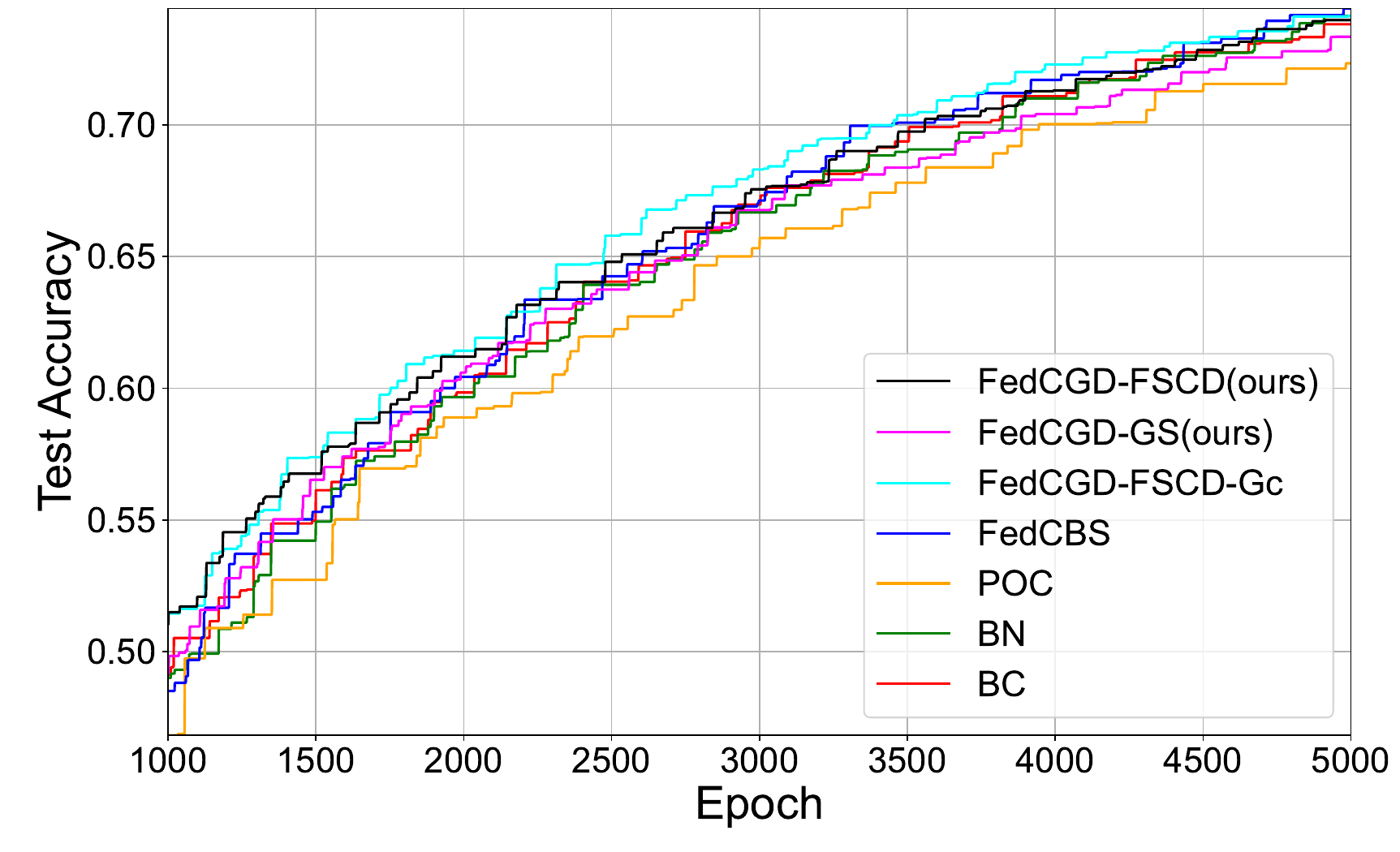} }
	\caption{Test accuracy of baselines with different local iterations.}
	\label{fig-2}
\end{figure}

\begin{figure}[t]
\centering
\includegraphics[width=0.45\textwidth]{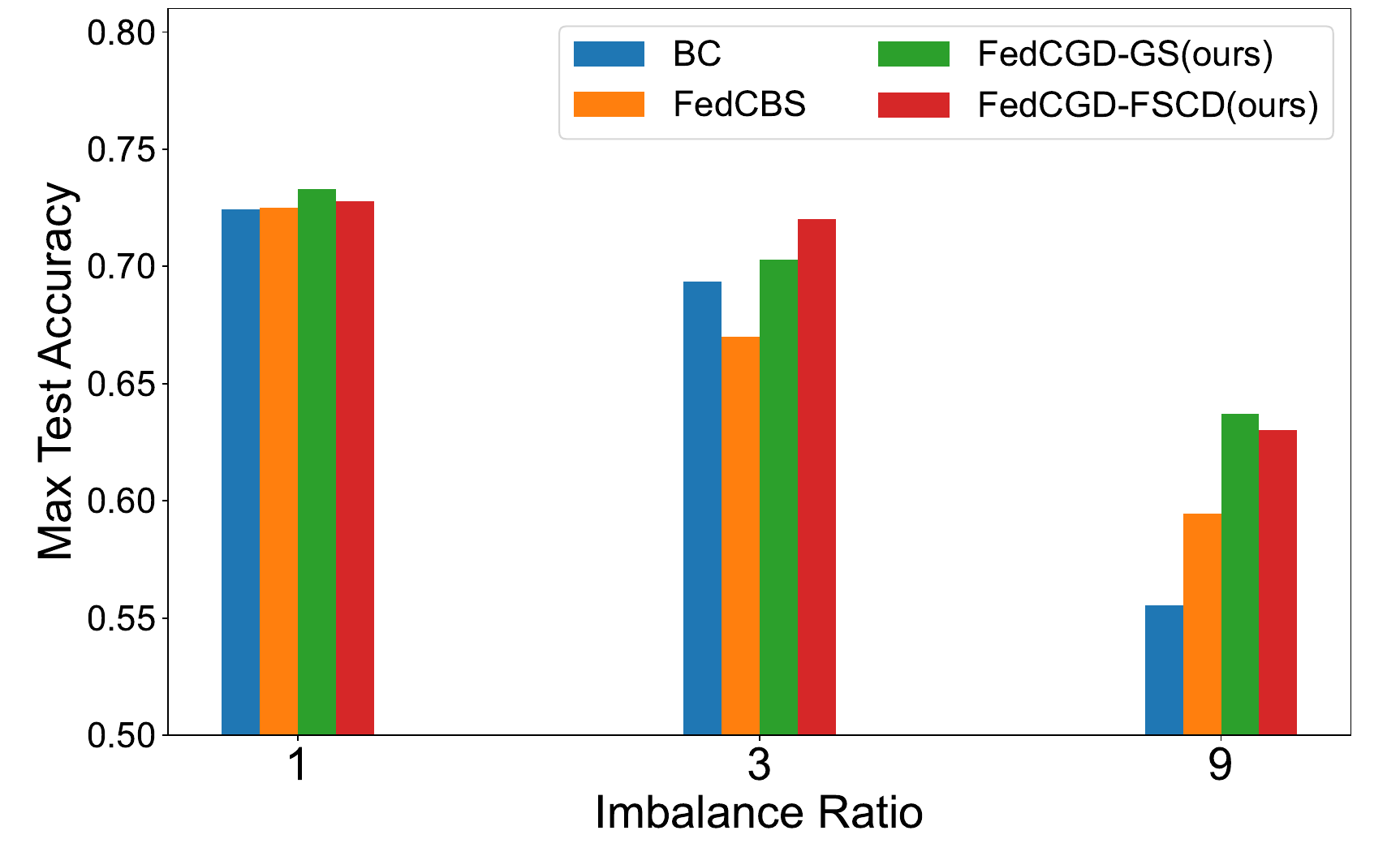}
\caption{Max test accuracy of baselines with different imbalance ratios.}
\label{fig24}
\end{figure}

\subsection{Performance of baselines on CIFAR-10}
\subsubsection{Balanced Total Dataset}

Firstly, Fig. \ref{fig-2} shows the performance of baselines and proposed algorithms listed above with an imbalance ratio $r=1$. The results demonstrate that our algorithms increase faster at the start of training, but finally, all the methods have similar performance. This is because when the total dataset is balanced, any scheduling method nearly equally chooses devices holding different classes of data throughout the training process, and thus the model learns the features of different classes sufficiently.  

\begin{table}[!t]
\centering
\caption{Performance of baselines}
\renewcommand{\arraystretch}{1.1} 
\begin{tabular}{|l|c|c|}
\hline 
Algorithm\rule[-1.5ex]{0pt}{3.0ex} & Avg. Scheduled Num. & Avg. WEMD \\ \hline
FedCGD-FSCD(ours) & 10.20 & \textbf{0.306}  \\ \hline
FedCGD-GS(ours) & \textbf{10.01} & 0.311 \\ \hline
FedCGD-FSCD-Gc & 10.10 & 0.310 \\ \hline
FCBS & 17.34 & 0.536 \\ \hline
POC & 13.71 & 0.621 \\ \hline
BN & 17.39 & 0.537 \\ \hline
BC & 17.67 & 0.527 \\ \hline
\end{tabular}\label{WEMD}
\end{table}

Specifically, FedCGD-GS reports the best accuracy when $\tau=1$, while FedCGD-FSCD-Gc and FedCGD-FSCD are just behind it. When $\tau=5$, FCBS and FedCGD-FSCD-Gc  achieve the highest accuracy, while FedCGD-FSCD and FedCGD-GS fall in the middle of the baselines. This is because when $\tau$ is larger than 1, the model iteration error appears and gradually dominates the CE difference. Although we assume a unified gradient bound $g$, devices may actually have different gradient bounds. Therefore, the difference in iteration error may overlap with the difference in device-level CGD and sampling variance, and thus reduce the advantage of our method.

Furthermore, Tab. \ref{WEMD} shows the number of scheduled devices and the corresponding WEMD. Comparing Tab. \ref{WEMD} with Fig. \ref{fig-2}, the accuracy of other baselines is positively correlated to the scheduled number, indicating that the accuracy gain of the better baselines, such as BC and FCBS, mainly comes from scheduling more devices; only our method FCGD schedules much fewer devices compared to baselines, while retaining relatively better performance. Specifically, FedCGD-FSCD-Gc achieves similar performance as FCBS, but schedules 41.8\% fewer devices. Tab. \ref{WEMD} also demonstrates that FCGD effectively reduces the WEMD of scheduled devices and thus reduces the device-level CGD.

Besides, comparing FedCGD-FSCD-Gc and FedCGD-FSCD, a more precise estimation for $G_c$ makes the accuracy increase faster at the start of training, but does not obviously increase the final accuracy. Therefore, we only adopt the algorithms that estimate $G$ in the subsequent simulations.

\subsubsection{Imbalanced Total Dataset}
Then we compare the performance of the baselines with different $r$. As is depicted in Fig. \ref{fig24}, when $r=3,9$, FedCGD-FSCD outperforms all baselines by a large gap. Specifically, when $r=9$, the maximum accuracy of FedCGD-FSCD is 4.3\% higher than the best baseline FCBS. 
The results demonstrate that when the total dataset is imbalanced, FCGD's ability to choose the important data, i.e., the minor classes, becomes more essential, while for other baselines, the minor classes are not well-trained, and the performance suffers.

\begin{figure}[!t]
	\centering
	\subfigure[$\alpha=0.1$.]{\label{fig11}			
		\includegraphics[width=0.13\textwidth]{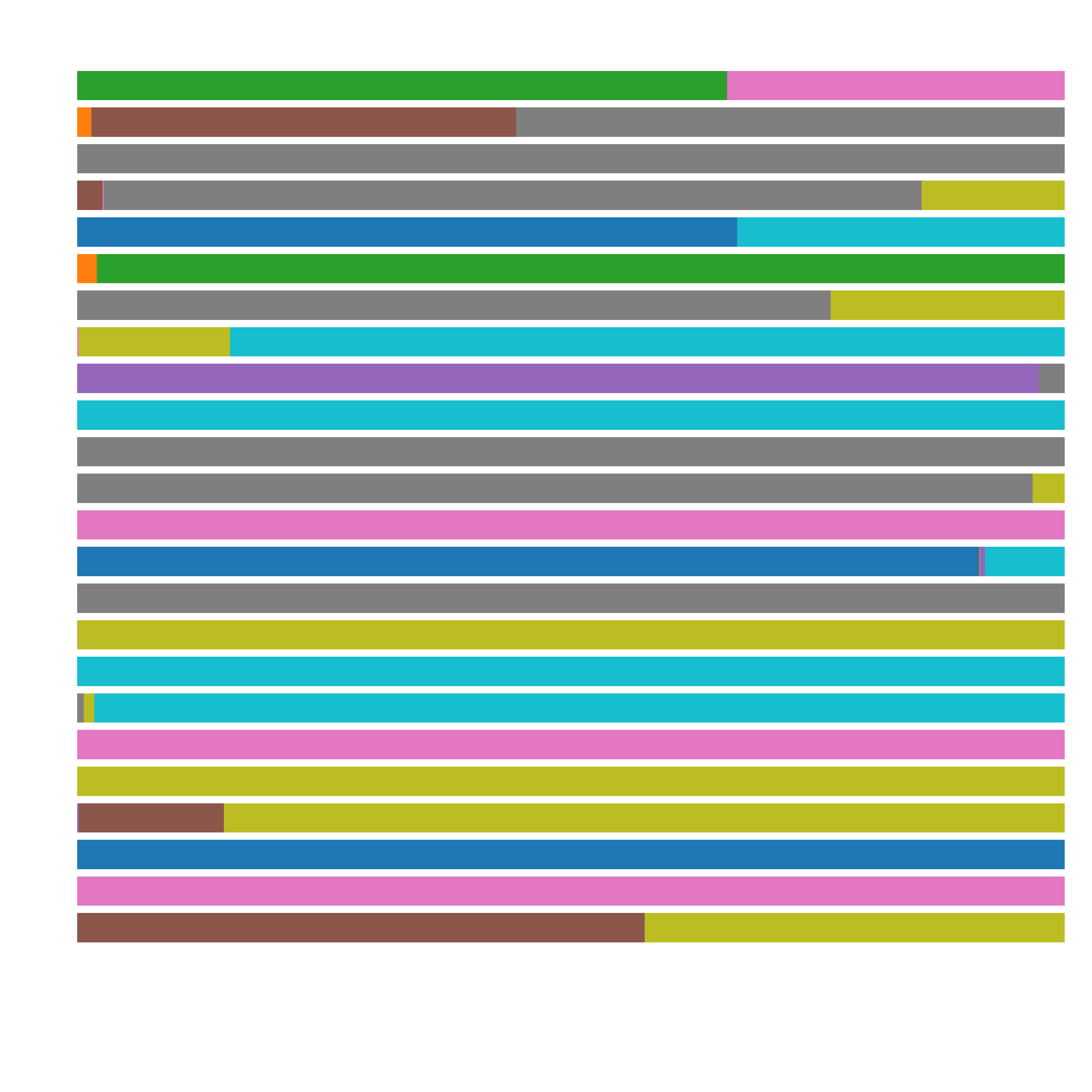} }		
	\subfigure[$\alpha=1$.]{\label{fig12}	
		\includegraphics[width=0.13\textwidth]{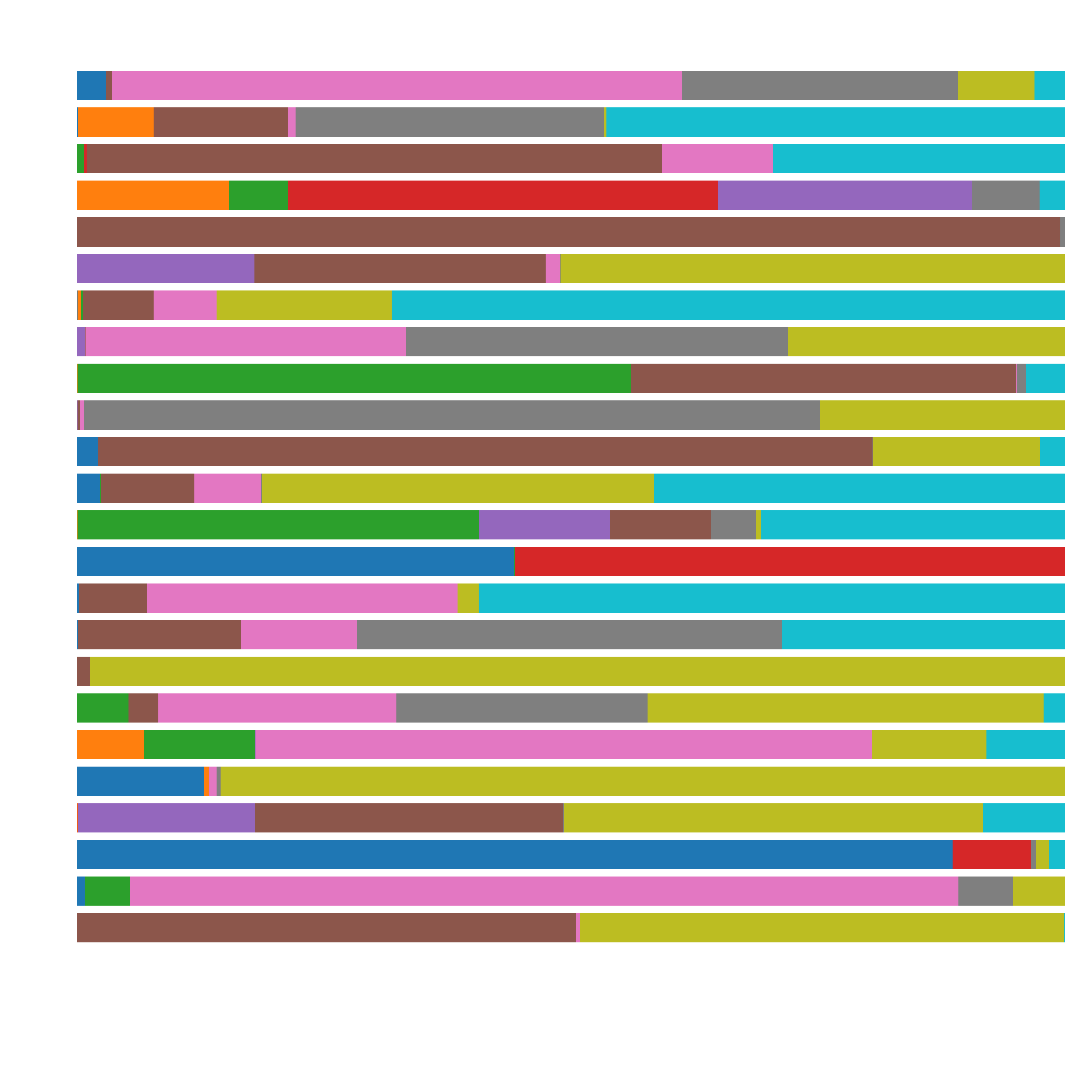} }
    \subfigure[$\alpha=10$.]{\label{fig13}	
		\includegraphics[width=0.13\textwidth]{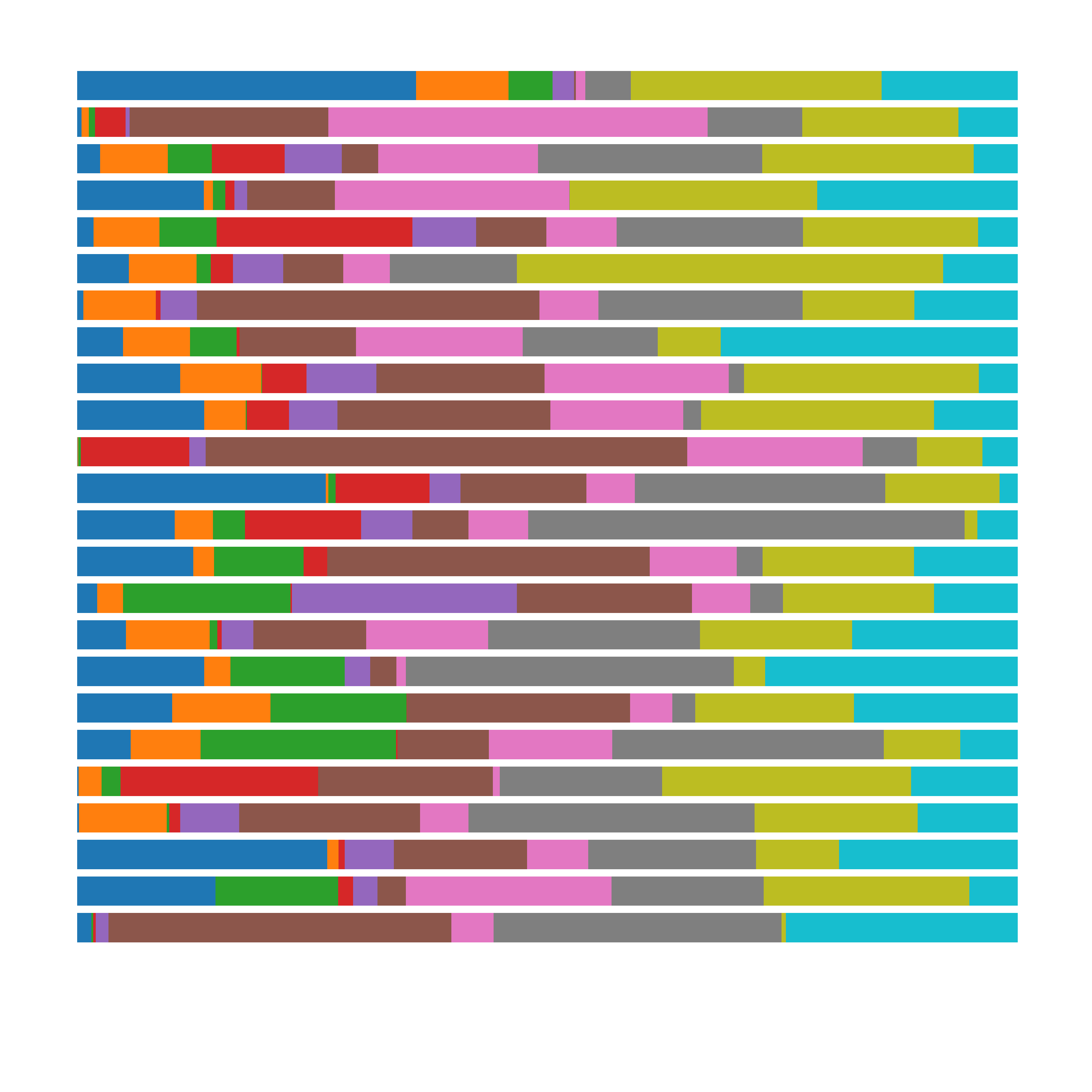} }
	\caption{Dirichlet data distribution with different $\alpha$. Each color represents one class of data.}
	\label{fig11-13}
\end{figure}

\begin{figure}[!t]
\centering
\includegraphics[width=0.49\textwidth]{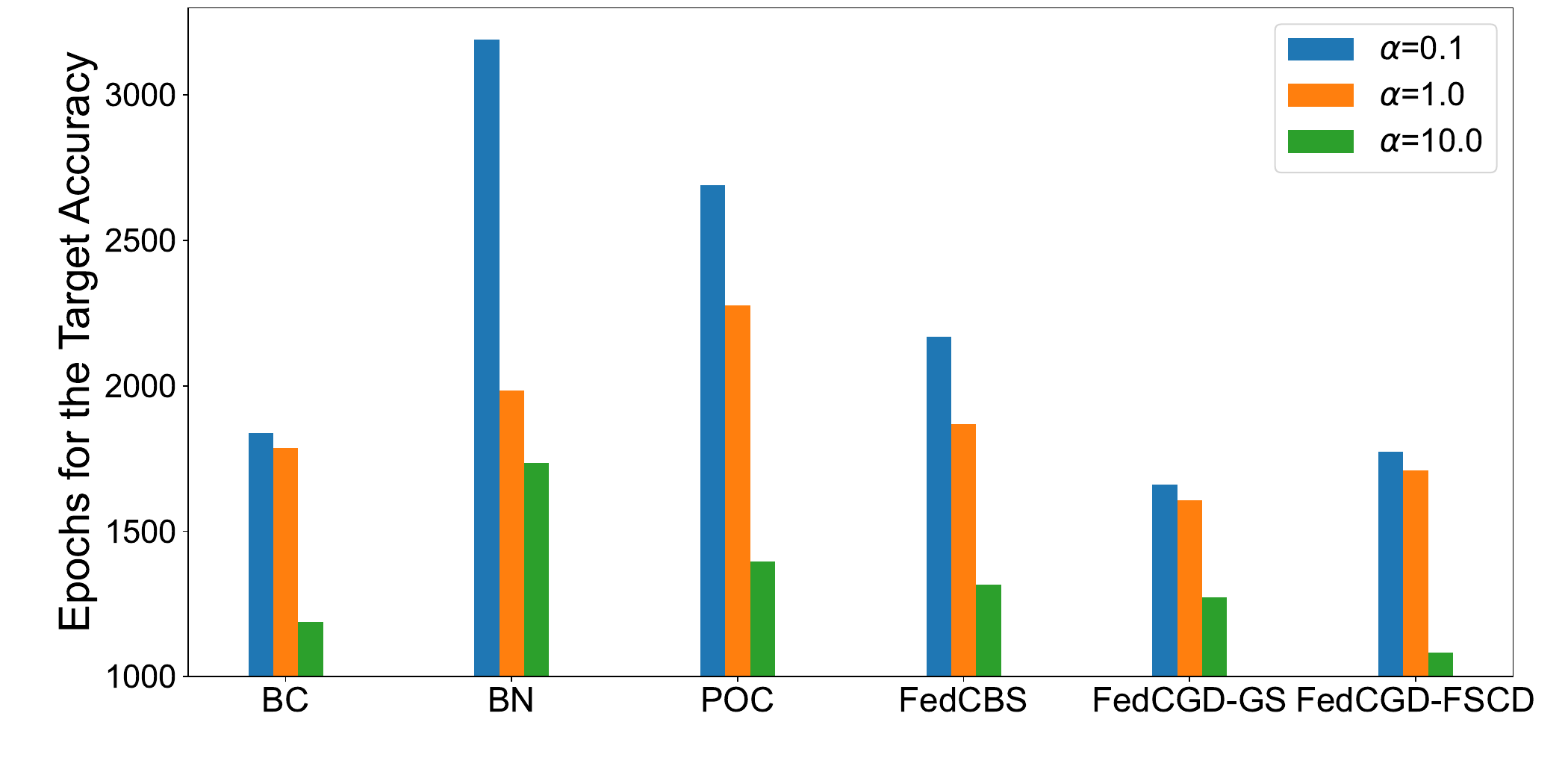}
\caption{Epochs to reach the target test accuracy of baselines with different $\alpha$.}
\label{fig15}
\end{figure}

\begin{figure}[!t]
\centering
\includegraphics[width=0.49\textwidth]{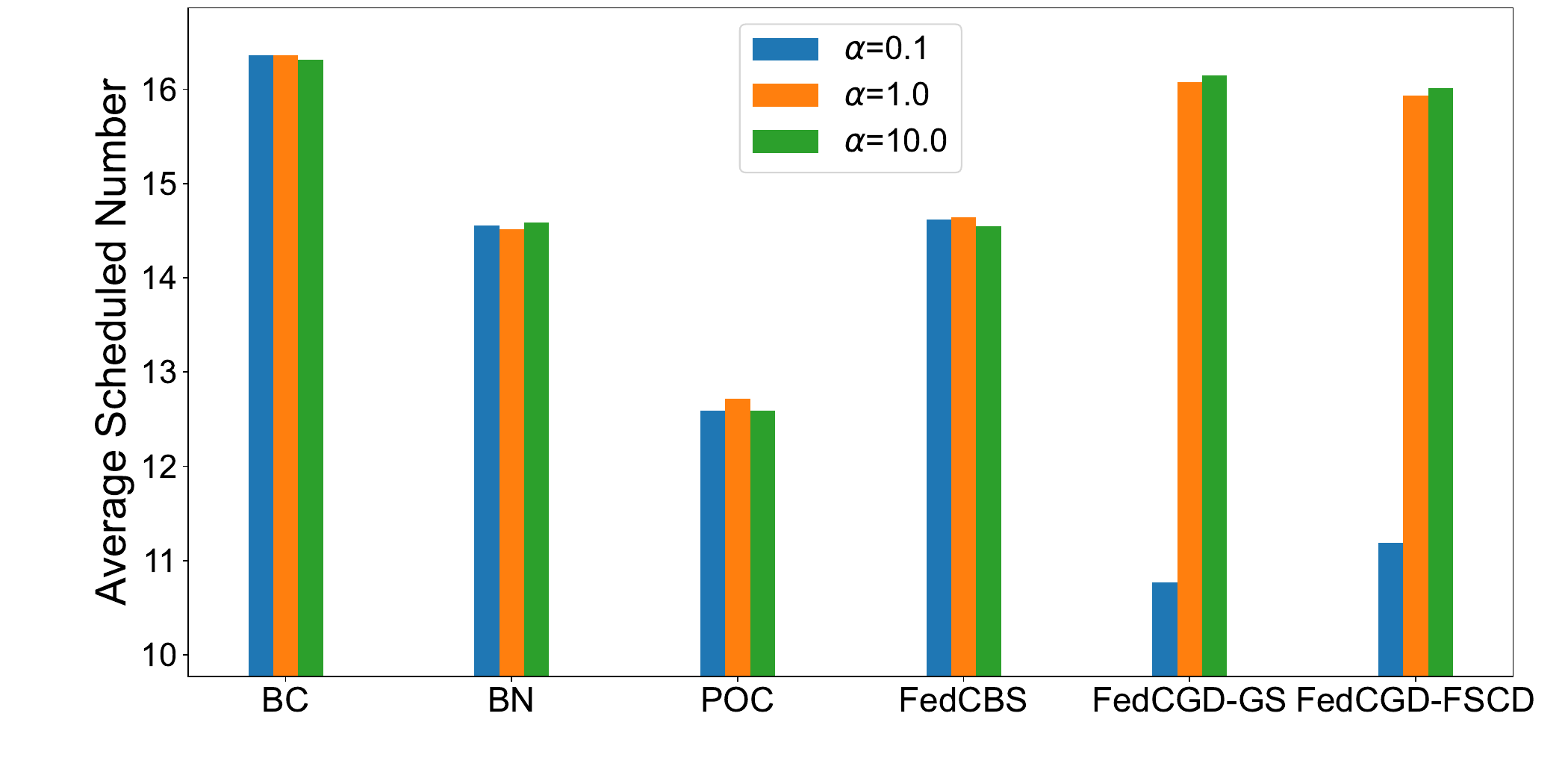}
\caption{Average scheduled number of baselines with different $\alpha$.}
\label{fig14}
\end{figure}

\subsubsection{Dirichlet Distribution}
Then we consider that the training datasets follow a Dirichlet distribution. $\alpha=0.1,1,10$ are considered
, where the data distribution of these cases is shown in Fig. \ref{fig11-13}.
Distribution of $\alpha=0.1$ is similar to the sort and partition case of $l=1$; In the distribution of $\alpha=10$, each device holds samples of nearly all the classes; $\alpha=1$ is at the middle of them. 

As is illustrated in Fig. \ref{fig15}, FedCGD-FSCD and FedCGD-GS achieve the target accuracy 65\% using the fewest epochs when $\alpha=0.1, 1$. When $\alpha=10$, FedCGD-GS uses the fewest epochs, while FedCGD-FSCD uses the third fewest epochs, but is still very close to BC. Furthermore, as is shown in Fig. \ref{fig14}, the scheduled number of FedCGD-FSCD and FedCGD-GS increases as $\alpha$ increases, while that of other baselines is basically the same. This is because when $\alpha$ is small, it's easier to find a schedule set that has a lower WEMD, e.g., when each device has data of one class, simply scheduling ten devices with different classes leads to WEMD$=0$. However, when $\alpha$ is large, each device has multiple classes of data, and it's hard to reach a small WEMD. Therefore, FCGD tends to schedule more devices to decrease sampling variance. Compared with baselines, FCGD shows the ability to dynamically adjust the focus between WEMD and sampling variance, and thus guarantees consistently good performance in different parameter settings.

\begin{figure}[!t]
	\centering
	\subfigure[Test accuracy.]{\label{fig20}		
	\includegraphics[width=0.45\textwidth]{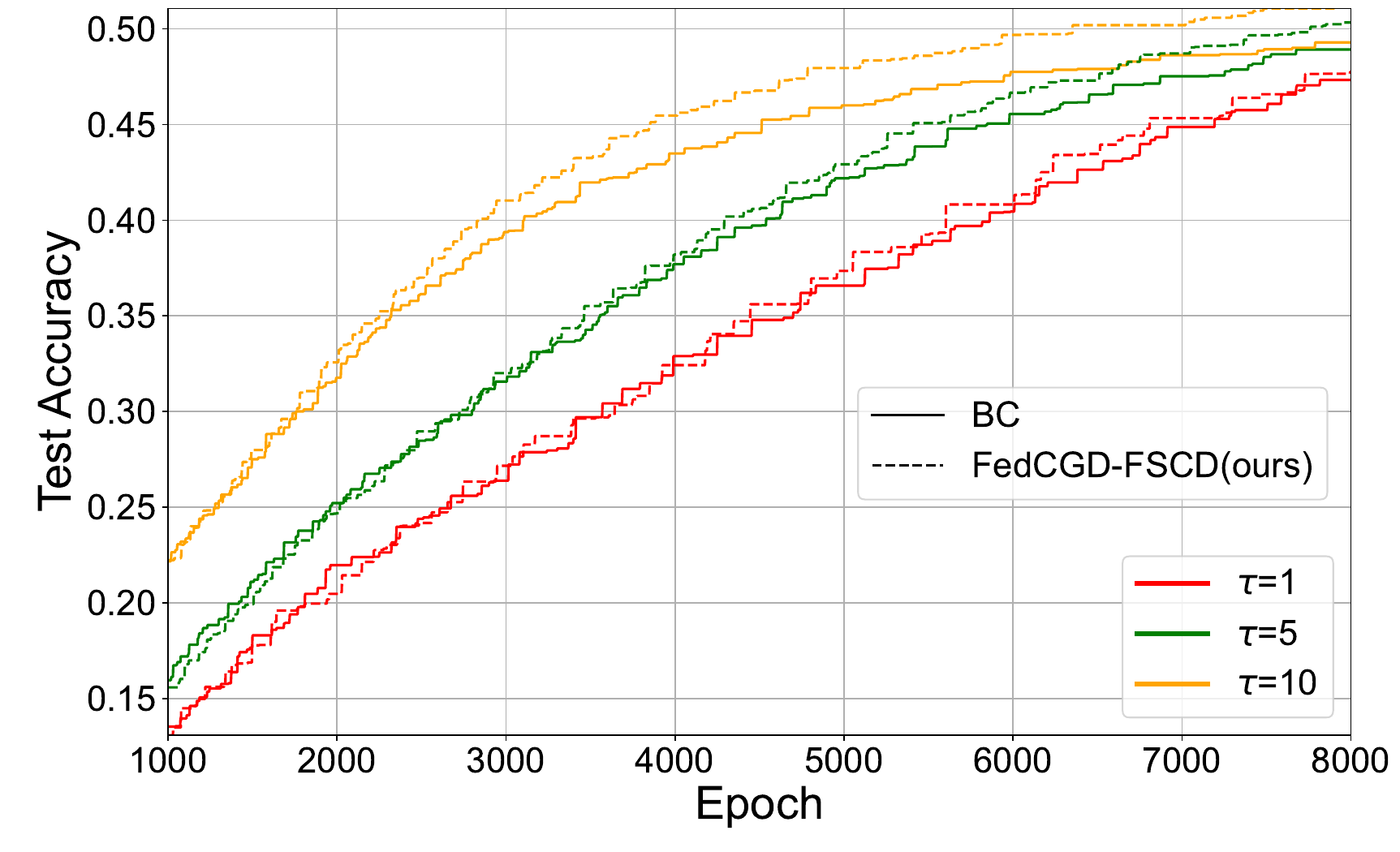} }	
	\subfigure[Estimation of $G$.]{\label{fig21}	
	\includegraphics[width=0.45\textwidth]{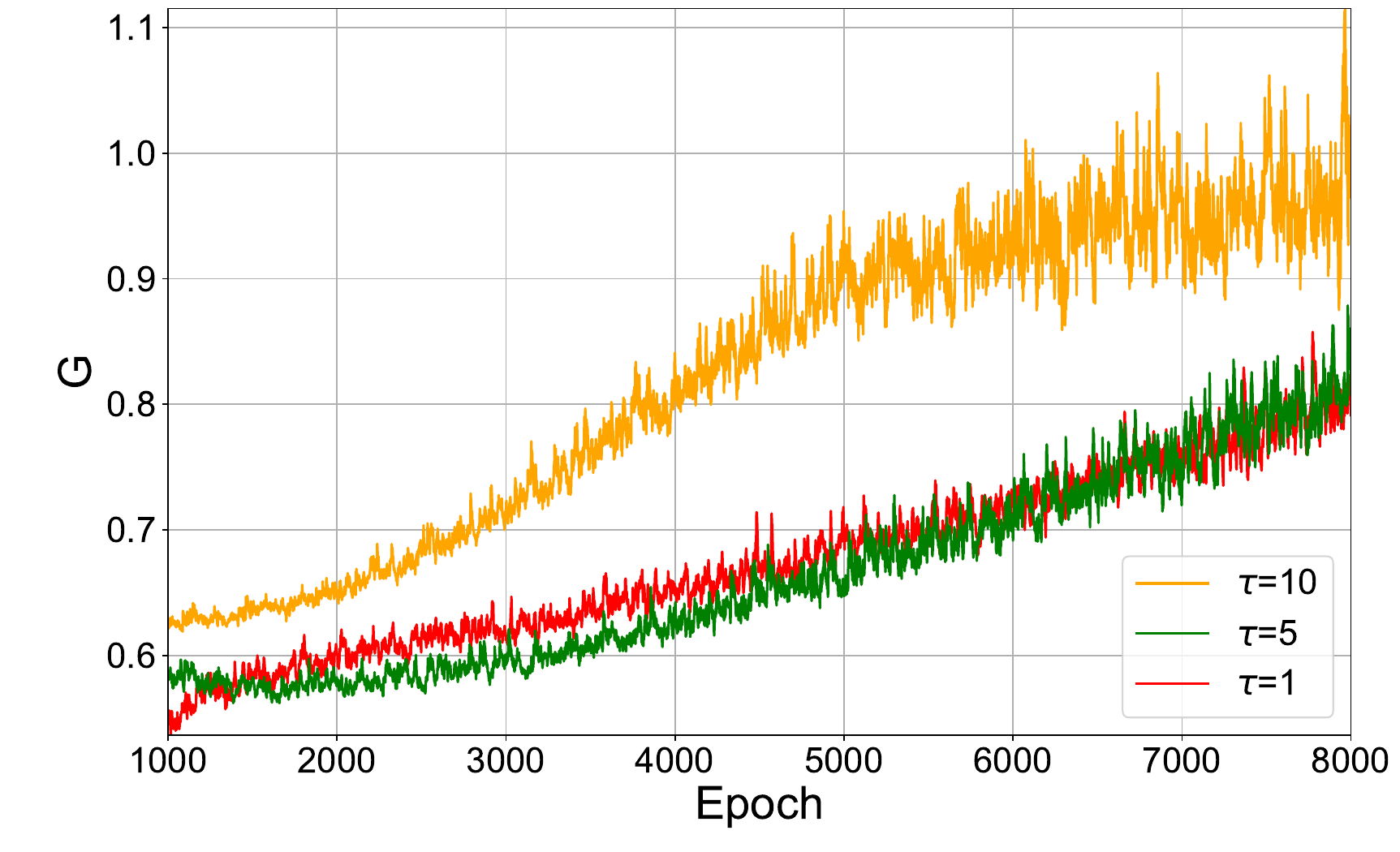} }
    \subfigure[Estimation of $\sigma$.]{\label{fig22}	
	\includegraphics[width=0.45\textwidth]{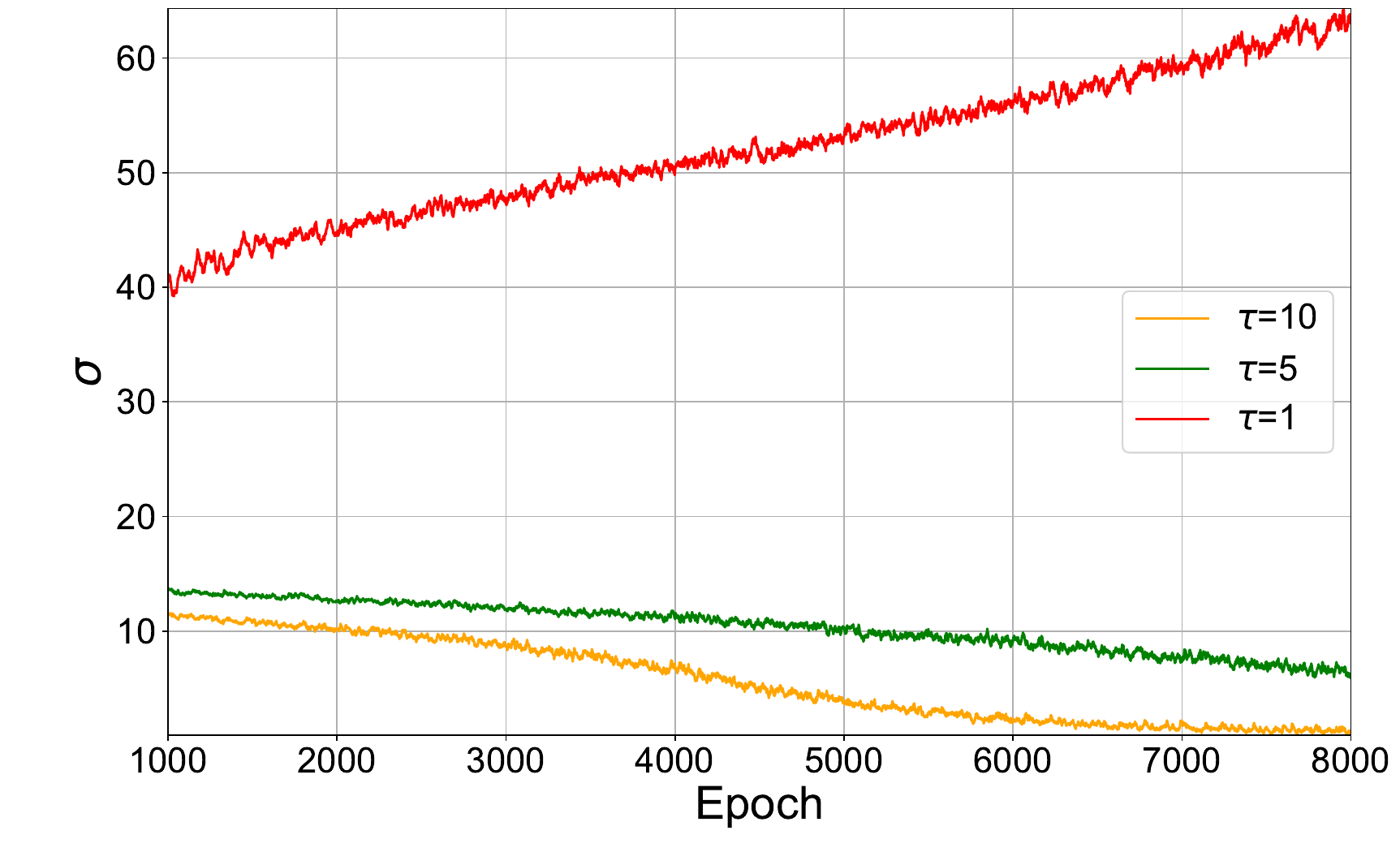} }
	\caption{Performance of baselines on the CIFAR-100.}
	\label{fig20-22}
    \end{figure}
    
\subsection{Performance of baselines on CIFAR-100}
For the CIFAR-100 dataset, we consider comparing the BC and FedCGD-FSCD methods. The data division adopts sort-and-partition with $l=5$.

As Fig. \ref{fig20} shows, the accuracy of the two methods is nearly the same when $\tau=1$, but when $\tau=5,10$, the accuracy of FedCGD-FSCD increases faster in the second half of the training. Specifically, when $\tau=10$, the final accuracy of FedCGD-FSCD is 2.5\% higher than BC. Fig. \ref{fig20-22} demonstrates a possible reason: the values of $G$ of different local updates all slowly increase during training, but the value of $\sigma$ has different evolving trends. When $\tau=1$, $\sigma$ also increases, which keeps sampling variance a high weight, so FCGD is similar to the BC method; when $\tau=5, 10$, $\sigma$ decreases, which makes device-level CGD more important, so FCGD shows an advantage. The results inspire us that $\frac{G}{\sigma}$ can serve as a leading indicator to remind us to consider device-level CGD: when $\frac{G}{\sigma}$ is small, the BC method is good enough; when $\frac{G}{\sigma}$ is large, the FCGD method is needed.
\section{conclusion}
\label{Sec-6}
In this article, we have investigated a collective measure for the data heterogeneity-aware scheduling in wireless FL. Convergence analysis has shown that the loss functions of both convex and non-convex FL are mainly impacted by the sum of multi-level CGDs, where sample-level CGD is further bounded by sampling variance. Device scheduling needs to strike a balance between them. 
Since device-level CGD is intractable in a realistic FL system, we have further transformed it into WEMD between the group and the global distribution by considering the classification problem. Afterward, an optimization problem has been formulated to minimize the sum of multi-level CGDs by balancing WEMD and sampling variance. The NP-hardness of the problem has been proved, and two heuristic algorithms have been proposed, where the greedy scheduling has $O(V^2)$ time complexity and the fix-sum coordinate descent achieves a relative error of 0.19\%. Finally, simulations have demonstrated that our methods achieve up to 4.2\% higher accuracy while scheduling up to 41.8\% fewer devices compared to baselines. 
Also, the estimated parameters $G$ and $\sigma$ have been proven to be an indicator of when to consider device-level CGD.
\onecolumn
\appendices
\section{Proof of Proposition \ref{prop1}}\label{app1}
To prove Proposition \ref{prop1}, we first define 
    \begin{align}
        \phi(j)=F\left(\boldsymbol{w}^{(j)}\right)-F\left(\boldsymbol{w}^*\right),
        \Tilde{\phi}(j)=F\left(\boldsymbol{v}^{(j)}\right)-F\left(\boldsymbol{w}^*\right).
    \end{align}
    As is proved in \cite{wang2019adaptive}, we have $\phi(j), \Tilde{\phi}(j)>0$ and $\phi(j)>\Tilde{\phi}\left(j+1\right)$. Besides, according to the definition, we obtain
    \begin{align}
        \phi(j)=\Tilde{\phi}(j),j\nmid j\label{phi},
        \phi(j)\ge\Tilde{\phi}(j),j\mid j.
    \end{align}
    Next, we prove the following lemma.
    \begin{lemma}\label{lemma5}
        For any $j$, when $\hat{\eta}\le\frac1{\beta}$ , we have
        \begin{equation}
            \frac1{\Tilde{\phi}\left(j+1\right)}-\frac1{\phi(j)}\ge\omega\hat{\eta}\left(1-\frac{\beta\hat{\eta}}{2}\right),
        \end{equation}
        where $\omega=\min\limits_j\frac1{\left\lVert{\boldsymbol{v}^{\left(j\right)}-\boldsymbol{w}^*}\right\rVert^2}$..
    \end{lemma}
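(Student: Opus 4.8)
The plan is to reproduce the classical one-step-improvement argument for convex, $\beta$-smooth objectives under gradient descent, specialized to the virtual centralized step that carries $\boldsymbol{w}^{(j)}$ to $\boldsymbol{v}^{(j+1)} = \boldsymbol{w}^{(j)} - \hat{\eta}\nabla F(\boldsymbol{w}^{(j)})$ (the step size $\hat{\eta}$ being the effective rate of the virtual centralized update), and then to convert the resulting quadratic recursion into the reciprocal-difference form stated in the lemma.

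First I would invoke the descent lemma implied by $\beta$-smoothness (Assumption \ref{assu1}, item 3). Writing the smoothness quadratic upper bound at $\boldsymbol{w}^{(j)}$ and substituting $\boldsymbol{v}^{(j+1)}-\boldsymbol{w}^{(j)} = -\hat{\eta}\nabla F(\boldsymbol{w}^{(j)})$ yields
\begin{equation}
F(\boldsymbol{v}^{(j+1)}) \le F(\boldsymbol{w}^{(j)}) - \hat{\eta}\left(1 - \frac{\beta\hat{\eta}}{2}\right)\left\lVert \nabla F(\boldsymbol{w}^{(j)})\right\rVert^2.
\end{equation}
Subtracting $F(\boldsymbol{w}^*)$ from both sides turns this into $\tilde{\phi}(j+1) \le \phi(j) - \hat{\eta}(1-\tfrac{\beta\hat{\eta}}{2})\lVert\nabla F(\boldsymbol{w}^{(j)})\rVert^2$. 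The hypothesis $\hat{\eta}\le 1/\beta$ guarantees $1-\tfrac{\beta\hat{\eta}}{2}\ge\tfrac12>0$, so the correction term has a definite sign, which is what makes every later rearrangement sign-safe.

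Second I would use convexity (Assumption \ref{assu1}, item 1) to lower bound the gradient norm in terms of $\phi(j)$. The first-order condition plus Cauchy--Schwarz gives $\phi(j) \le \langle \nabla F(\boldsymbol{w}^{(j)}),\, \boldsymbol{w}^{(j)}-\boldsymbol{w}^*\rangle \le \lVert\nabla F(\boldsymbol{w}^{(j)})\rVert\,\lVert\boldsymbol{w}^{(j)}-\boldsymbol{w}^*\rVert$, hence $\lVert\nabla F(\boldsymbol{w}^{(j)})\rVert^2 \ge \omega_j\,\phi(j)^2$ with $\omega_j = 1/\lVert\boldsymbol{w}^{(j)}-\boldsymbol{w}^*\rVert^2$ the squared inverse distance to the optimum from the iterate entering the gradient step; taking the minimum over $j$ produces the constant $\omega$. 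Substituting back gives the quadratic recursion $\tilde{\phi}(j+1)\le \phi(j) - \hat{\eta}(1-\tfrac{\beta\hat{\eta}}{2})\,\omega_j\,\phi(j)^2$.

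The decisive and most delicate step is the passage to reciprocals. Since $\phi(j),\tilde{\phi}(j+1)>0$, I would divide the recursion through by the positive product $\phi(j)\,\tilde{\phi}(j+1)$ and rearrange to obtain
\begin{equation}
\frac{1}{\tilde{\phi}(j+1)} - \frac{1}{\phi(j)} \ge \hat{\eta}\left(1 - \frac{\beta\hat{\eta}}{2}\right)\omega_j\,\frac{\phi(j)}{\tilde{\phi}(j+1)}.
\end{equation}
The main obstacle is discharging the stray ratio $\phi(j)/\tilde{\phi}(j+1)$: here I would invoke the already-established monotonicity $\phi(j)>\tilde{\phi}(j+1)$, so this ratio exceeds $1$ and may be dropped, and then replace $\omega_j$ by $\omega=\min_j\omega_j$ (legitimate because the coefficient is positive), yielding the claimed bound $\frac{1}{\tilde{\phi}(j+1)}-\frac{1}{\phi(j)}\ge \omega\hat{\eta}(1-\tfrac{\beta\hat{\eta}}{2})$. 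The positivity secured in the first step is exactly what keeps both the division and the dropping of the ratio valid; the only bookkeeping point worth checking is that the distance term is read off at the iterate feeding the gradient step, which is the origin of $\omega$ as written through the virtual sequence.
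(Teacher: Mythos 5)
Your proof is correct and follows essentially the same route as the paper's: a one-step descent inequality for the virtual centralized update, convexity plus Cauchy--Schwarz to lower-bound the squared gradient norm by $\omega_j\,\phi(j)^2$, division of the resulting quadratic recursion by $\phi(j)\tilde{\phi}(j+1)$, dropping the ratio $\phi(j)/\tilde{\phi}(j+1)\ge 1$, and passing to $\omega=\min_j\omega_j$. Two differences are worth recording. First, you derive the descent inequality directly from $\beta$-smoothness, whereas the paper imports it from Lemma 5 of \cite{wang2019adaptive}; yours is self-contained, with no loss. Second, you evaluate the gradient and the distance at $\boldsymbol{w}^{(j)}$, the point that actually feeds the virtual step $\boldsymbol{v}^{(j+1)}=\boldsymbol{w}^{(j)}-\eta\tau\nabla F\left(\boldsymbol{w}^{(j)}\right)$, so your constant comes out as $\min_j 1/\left\lVert\boldsymbol{w}^{(j)}-\boldsymbol{w}^*\right\rVert^2$ rather than the stated $\omega=\min_j 1/\left\lVert\boldsymbol{v}^{(j)}-\boldsymbol{w}^*\right\rVert^2$. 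The paper's proof instead evaluates everything at $\boldsymbol{v}^{(j)}$, which matches the stated $\omega$ but silently identifies $\phi(j)$ with $\tilde{\phi}(j)$ (it writes $\phi(j)=F\left(\boldsymbol{v}^{(j)}\right)-F\left(\boldsymbol{w}^*\right)$, which by definition is $\tilde{\phi}(j)$, and the available relation $\phi(j)\ge\tilde{\phi}(j)$ points the wrong way for the needed upper bound); since $\boldsymbol{w}^{(j)}\neq\boldsymbol{v}^{(j)}$ in general, this is a bookkeeping inconsistency inherited from adapting \cite{wang2019adaptive}, not a flaw in your argument. Your reading --- the distance measured at the iterate starting the gradient step --- is the one under which the descent step and the convexity step are simultaneously valid, at the cost of the constant $\omega$ being defined through the FL iterates rather than the virtual sequence; you flag this at the end, and it would be worth stating explicitly that the lemma should be read with $\omega$ so defined.
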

\begin{proof}
        From Lemma 5 in \cite{wang2019adaptive}, we have
        \begin{equation}
            \Tilde{\phi}\left(j+1\right)-\phi(j)\le-\hat{\eta}\left(1-\frac{\beta\hat{\eta}}2\right)\left\lVert{\nabla F\left(\boldsymbol{v}^{(j)}\right)}\right\rVert^2.
        \end{equation}
        Equivalently,
        \begin{equation}\label{lemma1_1}
            \Tilde{\phi}\left(j+1\right)\le\phi(j)-\hat{\eta}\left(1-\frac{\beta\hat{\eta}}2\right)\left\lVert{\nabla F\left(\boldsymbol{v}^{(j)}\right)}\right\rVert^2.
        \end{equation}
        Furthermore, the convexity condition and Cauchy-Schwarz inequality give that
        \begin{align}
            \phi(j)&=F\left(\boldsymbol{v}^{(j)}\right)-F\left(\boldsymbol{w}^*\right)
            \le \nabla F\left(\boldsymbol{v}^{(j)}\right)^T\left(\boldsymbol{v}^{(j)}-\boldsymbol{w}^*\right)
            \le\left\lVert{\nabla F\left(\boldsymbol{v}^{(j)}\right)}\right\rVert \left\lVert{\boldsymbol{v}^{(j)}-\boldsymbol{w}^*}\right\rVert.\label{lemma1_2}
        \end{align}
        Take \eqref{lemma1_1} into \eqref{lemma1_2}, and we get
        \begin{equation}
            \Tilde{\phi}\left(j+1\right)\le\phi(j)-\hat{\eta}\left(1-\frac{\beta\hat{\eta}}2\right)\frac{\phi(j)^2}{\left\lVert{\boldsymbol{v}^{(j)}-\boldsymbol{w}^*}\right\rVert^2}.
        \end{equation}
        Dividing both sides by $\phi(j)\Tilde{\phi}\left(j+1\right)$, we obtain
        \begin{align}
            \frac1{\phi(j)} 
            &\le
            \frac1{\Tilde{\phi}\left(j+1\right)}-\hat{\eta}\left(1-\frac{\beta\hat{\eta}}2\right)\frac{\phi(j)}{\Tilde{\phi}\left(j+1\right)\left\lVert{\boldsymbol{v}^{(j)}-\boldsymbol{w}^*}\right\rVert^2}\\
            &\le 
            \frac1{\Tilde{\phi}\left(j+1\right)}-\hat{\eta}\left(1-\frac{\beta\hat{\eta}}2\right)\frac{1}{\left\lVert{\boldsymbol{v}^{(j)}-\boldsymbol{w}^*}\right\rVert^2}\\
            &\le 
            \frac1{\Tilde{\phi}\left(j+1\right)}-\omega\hat{\eta}\left(1-\frac{\beta\hat{\eta}}2\right)\label{lemma1_3},
        \end{align}
        where the second inequality comes from (\ref{phi}), and the third inequality holds because $\left\lVert{\boldsymbol{v}^{\left(j+1\right)}-\boldsymbol{w}^*}\right\rVert^2
        \le
        \left\lVert{\boldsymbol{v}^{(j)}-\boldsymbol{w}^*}\right\rVert^2$. Arranging \eqref{lemma1_3} we prove Lemma \ref{lemma5}.
\end{proof}

Now we can focus on proving Proposition \ref{prop1}.
  
        Using Lemma \ref{lemma5} and considering $j\in[0,J]$, we have
        
        \begin{align}
            \sum\limits_{j=0}^{J-1}
            \left[\frac1{\Tilde{\phi}\left(j+1\right)}-\frac1{\phi(j)}\right]
            =&\frac1{{\phi}\left(J\right)}-\frac1{\phi\left(0\right)}-\sum\limits_{j=1}^{J}\Big[\frac1{\phi\left(j\right)}-\frac1{\Tilde{\phi}\left(j\right)}\Big]\label{dcdiff}
            \ge J\omega\hat{\eta}\left(1-\frac{\beta\hat{\eta}}2\right).
        \end{align}
        Furthermore, each term in the sum of \eqref{dcdiff} can be expressed as
        \begin{align}
            \frac1{\phi\left(j\right)}-\frac1{\Tilde{\phi}\left(j\right)}
            &=
            \frac{\Tilde{\phi}\left(j\right)-\phi\left(j\right)}{\Tilde{\phi}\left(j\right)\phi\left(j\right)}\\
            &=
            \frac{\mathbb{E}\left[F\left(\boldsymbol{v}^{\left(j\right)}\right)-F\left(\boldsymbol{w}^{\left(j\right)}\right)\right]}{\Tilde{\phi}\left(j\right)\phi\left(j\right)}
            \ge 
            \frac{-\rho \mathbb{E}\left[U_k\right]}{\Tilde{\phi}\left(j\right)\phi\left(j\right)}\label{uphi},
        \end{align}
        where the inequality holds because of Assumption \ref{assu1}. Combining condition (3) in Proposition \ref{prop1} with \eqref{phi}, the denominator in the right-hand side of \eqref{uphi} can be bounded by
        \begin{align}
            \Tilde{\phi}\left(j\right)\phi\left(j\right)\ge
            \Tilde{\phi}^2\left(j\right)\ge\epsilon^2.\label{phiep}
        \end{align}
        Substituting \eqref{phiep} into \eqref{dcdiff}, we obtain
        \begin{align}
            \frac1{\phi\left(J\right)}
            &\ge\frac1{\phi\left(J\right)}-\frac1{\phi\left(0\right)}\\
            &\ge
            J\omega\hat{\eta}\left(1-\frac{\beta\hat{\eta}}2\right)-\sum\limits_{j=1}^{J}\frac{\rho \mathbb{E}\left[U_k\right]}{\epsilon^2}>0.\label{finalphi}
        \end{align}
        where the first inequality holds because $\phi(j)>0$ for any $j$.
        We rearrange \eqref{finalphi} and substitute $\hat{\eta}$ by $\eta \tau$ to get
        \begin{align}
            \phi\left(J\right)
            &=F\left(\boldsymbol{w}^{\left(J\right)}\right)-F\left(\boldsymbol{w}^*\right)
            \le \frac1{J\omega\eta\tau\left(1-\frac{\beta\eta\tau}2\right)-\sum\limits_{j=1}^{J}\frac{\rho \mathbb{E}\left[U_j\right]}{\epsilon^2}}\\
        \end{align}

\section{Proof of Lemma \ref{lemma1}}\label{app2}
We have the following for the FC difference:
\begin{align}
&\mathbb{E}\left\lVert\boldsymbol{w}^{(j)}-\boldsymbol{v}^{(j)}\right\rVert  \\
=&\mathbb{E}\left\lVert\sum\limits_{v\in\Pi^{(j)}} \alpha_{v}^{(j)} \left(\boldsymbol{w}_v^{(j)}
-\boldsymbol{v}^{(j)}\right)\right\rVert\\
=&\mathbb{E}\left\lVert\sum\limits_{v\in\Pi^{(j)}} \alpha_{v}^{(j)} \left[\left(\boldsymbol{w}_v^{(j)}
-\boldsymbol{\hat{w}}_v^{(j,\tau)}\right)
+\left(\boldsymbol{\hat{w}}_v^{(j,\tau)}-\boldsymbol{v}^{(j)}\right)
\right]\right\rVert\\
\leq &
\sum\limits_{v\in\Pi^{(j)}} \alpha_{v}^{(j)} \mathbb{E}\left\lVert\boldsymbol{w}_v^{(j)}
-\boldsymbol{\hat{w}}_v^{(j,\tau)}\right\rVert +
\mathbb{E}\left\lVert\sum\limits_{v\in\Pi^{(j)}} \alpha_{v}^{(j)} \left[
\boldsymbol{w}^{(j-1)}-\eta\tau\nabla f_{v,j}\left(\boldsymbol{w}^{(j-1)}\right)
-\left(\boldsymbol{w}^{(j-1)}-
\eta\tau\nabla F(\boldsymbol{w}^{(j-1)})\right)
\right]\right\rVert
\\
= &
\sum\limits_{v\in\Pi^{(j)}} \alpha_{v}^{(j)} \mathbb{E}\left\lVert\boldsymbol{w}_v^{(j)}
-\boldsymbol{\hat{w}}_v^{(j,\tau)}\right\rVert+
\eta\tau\underbrace{\mathbb{E}\left\lVert\sum\limits_{v\in\Pi^{(j)}} \alpha_{v}^{(j)} \left(\nabla f_{v,j}(\boldsymbol{w}^{(j-1)})-\nabla F(\boldsymbol{w}^{(j-1)})\right)
\right\rVert}_{A},\label{cediff}\\
\end{align}
where the inequality holds because of the absolute value inequality. The term (A) can be further decomposed by
\begin{align}
    A\le
    \mathbb{E}\left\lVert\sum\limits_{v\in\Pi^{(j)}} \alpha_{v}^{(j)} \left(\nabla f_{v,j}(\boldsymbol{w}^{(j-1)})-\nabla f_{v}(\boldsymbol{w}^{(j-1)})\right)\right\rVert
    + \left\lVert\sum\limits_{v\in\Pi^{(j)}} \alpha_{v}^{(j)} \nabla f_{v}(\boldsymbol{w}^{(j-1)})-\nabla F(\boldsymbol{w}^{(j-1)})\right\rVert\\
    \label{decomposeA}
\end{align}
Taking \eqref{decomposeA} into \eqref{cediff}, Lemma \ref{lemma1} is proved.

\section{Proof of Lemma \ref{lemma2}}\label{app3}
We have the following for sample-level gradient divergence:
\begin{align}
    &\mathbb{E}\left\lVert\sum\limits_{v\in\Pi^{(j)}} \!\alpha_{v}^{(j)} \left(\nabla f_{v,j}(\boldsymbol{w}^{(j-1)})\!-\!\nabla f_{v}(\boldsymbol{w}^{(j-1)})\right)\right\rVert\\
    &\le\sqrt{\mathbb{E}\left\lVert\sum\limits_{v\in\Pi^{(j)}} \!\alpha_{v}^{(j)} \left(\nabla f_{v,j}(\boldsymbol{w}^{(j-1)})\!-\!\nabla f_{v}(\boldsymbol{w}^{(j-1)})\right)\right\rVert^2}\\
    &=\sqrt{\mathbb{E}\left\lVert\frac1{|\Pi^{(j)}|b}\sum\limits_{v\in\Pi^{(j)}}\sum\limits_{i=1}^b\left[\nabla f_v(\boldsymbol{w}^{(j-1)},\boldsymbol{x}_{v,i})-\nabla f_v(\boldsymbol{w}^{(j-1)})\right]\right\rVert^2}\\
    &\le\sqrt{\frac1{(|\Pi^{(j)}|b)^2}\sum\limits_{v\in\Pi^{(j)}}\sum\limits_{i=1}^b\mathbb{E}\left\lVert\nabla f_v(\boldsymbol{w}^{(j-1)},\boldsymbol{x}_{v,i})-\nabla f_v(\boldsymbol{w}^{(j-1)})\right\rVert^2}\\
    &\le\frac{\sigma}{\sqrt{|\Pi^{(j)}|b}},
\end{align}
where the first inequality holds because $\mathbb{E}X\le\sqrt{\mathbb{E}\left[X^2\right]}$, the second inequality holds because the independence of gradients among different devices and the unbiasness of sampling makes the cross term to be 0, and the last inequality holds because of assumption 5) of Assumption \ref{assu1}.
        
\section{Proof of Lemma \ref{lemma3}}\label{app4}
\begin{proof}
    We can obtain the iterative form of $\mathbb{E}\left\lVert \boldsymbol{w}_v^{(j,t)} - \boldsymbol{\hat{w}}_v^{(j,t)} \right\rVert$ as following:
    \begin{align}
            \mathbb{E}\left\lVert \boldsymbol{w}_v^{(j,t)} - \boldsymbol{\hat{w}}_v^{(j,t)} \right\rVert
            &=\mathbb{E}\left\lVert \left(\boldsymbol{w}_v^{(j,t-1)} - \eta\nabla f_{v,j,t-1}\left(\boldsymbol{w}_v^{(j,t-1)}\right)\right) - 
            \left(\boldsymbol{\hat{w}}_v^{(j,t-1)} - \eta\nabla f_{v,j,t-1}\left(\boldsymbol{w}^{(j)}\right)\right)\right\rVert\\
            &=\mathbb{E}\left\lVert \left(\boldsymbol{w}_v^{(j,t-1)} - \boldsymbol{\hat{w}}_v^{(j,t-1)}\right)-\eta\left(\nabla f_{v,j,t-1}\left(\boldsymbol{w}_v^{(j,t-1)}\right) - 
            \nabla f_{v,j,t-1}\left(\boldsymbol{w}^{(j)}\right)\right) \right\rVert\\
            &\le \mathbb{E}\left\lVert \boldsymbol{w}_v^{(j,t-1)} - \boldsymbol{\hat{w}}_v^{(j,t-1)}\right\rVert
            + \eta\beta \mathbb{E}\left\lVert \boldsymbol{w}_v^{(j,t-1)} - \boldsymbol{w}^{(j)}\right\rVert, \label{localupdate2}
    \end{align}
    and the right hand side of Eq. \eqref{localupdate2} can be further bounded by
    \begin{align}
    \mathbb{E}\left\lVert \boldsymbol{w}_v^{(j,t-1)} - \boldsymbol{w}^{(j)}\right\rVert
    =\mathbb{E}\left\lVert\sum\limits_{u=0}^{t-2}\nabla f_u(\boldsymbol{w}_v^{(j, u)})\right\rVert
    \le \sum\limits_{u=0}^{t-2} \left\lVert
    \mathbb{E}\nabla f_u(\boldsymbol{w}_v^{(j, u)})\right\rVert
    \le (t-1)g,
    \end{align}
    where the last inequality holds because $\mathbb{E}X\le\sqrt{\mathbb{E}\left[X^2\right]}$ and assumption 4) of Assumption \ref{assu1}. Then by iterating we get
    \begin{align} \mathbb{E}\left\lVert \boldsymbol{w}_v^{(j)} - \boldsymbol{\hat{w}}_v^{(j,\tau)} \right\rVert\le
    \eta\beta g\sum\limits_{t=0}^{\tau-1}t
    =\frac12\eta\beta g\tau(\tau-1).
    \end{align}
\end{proof}

\section{Proof of Theorem \ref{theo2}}\label{app5}
Because of the $\beta$-smoothness of $F(\boldsymbol{w})$, we have

\begin{align}
    \mathbb{E}\left[F\left(\boldsymbol{w}^{(j)}\right)\right]
    \le \mathbb{E}\left[F\left(\boldsymbol{w}^{(j-1)}\right)\right] + 
    \underbrace{\left\langle\nabla F\left(\boldsymbol{w}^{(j-1)}\right), \mathbb{E}\left[\boldsymbol{w}^{(j)}-\boldsymbol{w}^{(j-1)}\right]\right\rangle}_B
    +\underbrace{\frac{\beta}2 \mathbb{E}\left\lVert \boldsymbol{w}^{(j)}-\boldsymbol{w}^{(j-1)} \right\rVert^2}_C.
\end{align}
Denote $\boldsymbol{U}_j = \boldsymbol{w}^{(j)}-\boldsymbol{v}^{(j)}$, so we have $U_j=\left\lVert\boldsymbol{U}_j\right\rVert$. Since
\begin{align}
    &\boldsymbol{w}^{(j)}-\boldsymbol{w}^{(j-1)}=\boldsymbol{w}^{(j)}-\left(\boldsymbol{v}^{(j)}+\eta\tau \nabla F(\boldsymbol{w}^{(j-1)})\right)=\boldsymbol{U}_j-\eta\tau \nabla F(\boldsymbol{w}^{(j-1)}),
\end{align}
we decompose terms B and C by
\begin{align}
    B=&\mathbb{E}\left[\left\langle\nabla F\left(\boldsymbol{w}^{(j-1)}\right), \boldsymbol{U}_j\right\rangle\right]
    - \eta\tau \left\lVert\nabla F\left(\boldsymbol{w}^{(j-1)}\right)\right\rVert^2\\
    \le&\left(\frac{\eta\tau}2-\eta\tau\right)\left\lVert\nabla F\left(\boldsymbol{w}^{(j-1)}\right)\right\rVert^2+
    \frac{1}{2\eta\tau}\mathbb{E}\left\lVert\boldsymbol{U}_j\right\rVert^2,\\
C=&\frac{\beta}2 \mathbb{E}\left\lVert\boldsymbol{U}_j-\eta\tau\nabla F\left(\boldsymbol{w}^{(j-1)}\right)\right\rVert^2
\le \beta \left(\eta^2\tau^2\left\lVert\nabla F(\boldsymbol{w}^{(j-1)})\right\rVert^2
+\mathbb{E}\left[U_j^2\right]\right),
\end{align}
so we can bound the squared norm of gradients by
\begin{align}
    \left(\frac{\eta\tau}2-\beta\eta^2\tau^2\right)\left\lVert\nabla F\left(\boldsymbol{w}^{(j-1)}\right)\right\rVert^2
    \le \mathbb{E}\left[F\left(\boldsymbol{w}^{(j-1)}\right)\right] - \mathbb{E}\left[F\left(\boldsymbol{w}^{(j)}\right)\right] + (\frac1{2\eta\tau}+\beta)\mathbb{E}\left[U_j^2\right].
\end{align}
Therefore, as long as $\eta\le\frac1{2\beta\tau}$, summing up over $j$ we have 
\begin{align}
    \frac1J\sum_{j=0}^{J-1}\left(\frac{\eta\tau}2-\beta\eta^2\tau^2\right)\left\lVert\nabla F\left(\boldsymbol{w}^{(j)}\right)\right\rVert^2
    \le \frac{\mathbb{E}\left[F\left(\boldsymbol{w}^{(0)}\right)\right] - \mathbb{E}\left[F\left(\boldsymbol{w}^{(J)}\right)\right]}J + (\frac1{2\eta\tau}+\beta)\frac1J\sum_{j=1}^{J}\mathbb{E}\left[U_j^2\right].
\end{align}
Then we focus on bounding $\mathbb{E}\left[U_j^2\right]$.

\begin{align}
\mathbb{E}\left[U_j^2\right]
= &\mathbb{E}\left\lVert\sum\limits_{v\in\Pi^{(j)}} \alpha_{v}^{(j)} \left[\left(\boldsymbol{w}_v^{(j)}
-\boldsymbol{\hat{w}}_v^{(j,\tau)}\right)
+\left(\boldsymbol{\hat{w}}_v^{(j,\tau)}-\boldsymbol{v}^{(j)}\right)
\right]\right\rVert^2
\\\le& 2\mathbb{E}\left\lVert\sum\limits_{v\in\Pi^{(j)}} \alpha_{v}^{(j)} \left(\boldsymbol{w}_v^{(j)}
-\boldsymbol{\hat{w}}_v^{(j,\tau)}\right)\right\rVert^2
+2\underbrace{\mathbb{E}\left\lVert\sum\limits_{v\in\Pi^{(j)}} \alpha_{v}^{(j)}\left(\boldsymbol{\hat{w}}_v^{(j,\tau)}-\boldsymbol{v}^{(j)}\right)
\right\rVert^2}_D,
\end{align}
where term D can be further decomposed by
\begin{align}
    D\le&\mathbb{E}\left[\left\lVert\sum\limits_{v\in\Pi^{(j)}} \alpha_{v}^{(j)} \left(\nabla f_{v,j}(\boldsymbol{w}^{(j-1)})-\nabla f_{v}(\boldsymbol{w}^{(j-1)})\right)\right\rVert
    + \left\lVert\sum\limits_{v\in\Pi^{(j)}} \alpha_{v}^{(j)} \nabla f_{v}(\boldsymbol{w}^{(j-1)})-\nabla F(\boldsymbol{w}^{(j-1)})\right\rVert\right]^2\\
    =&\mathbb{E}\left\lVert\sum\limits_{v\in\Pi^{(j)}} \alpha_{v}^{(j)} \left(\nabla f_{v,j}(\boldsymbol{w}^{(j-1)})-\nabla f_{v}(\boldsymbol{w}^{(j-1)})\right)\right\rVert^2 + \left\lVert\sum\limits_{v\in\Pi^{(j)}} \alpha_{v}^{(j)} \nabla f_{v}(\boldsymbol{w}^{(j-1)})-\nabla F(\boldsymbol{w}^{(j-1)})\right\rVert^2\\
    & + 2\mathbb{E}\left\lVert\sum\limits_{v\in\Pi^{(j)}} \alpha_{v}^{(j)} \left(\nabla f_{v,j}(\boldsymbol{w}^{(j-1)})-\nabla f_{v}(\boldsymbol{w}^{(j-1)})\right)\right\rVert
    \left\lVert\sum\limits_{v\in\Pi^{(j)}} \alpha_{v}^{(j)} \nabla f_{v}(\boldsymbol{w}^{(j-1)})-\nabla F(\boldsymbol{w}^{(j-1)})\right\rVert\\
    \le&\left(\frac{\sigma}{\sqrt{|\Pi^{(j)}|b}} + \Delta^{(j)}\right)^2,
\end{align}
where the last inequality holds because of Lemma \ref{lemma3} and the fact that $\mathbb{E}X\le\sqrt{\mathbb{E}\left[X^2\right]}$.

\section{Proof of Lemma \ref{lemma4}}\label{app6}

We set $C=1,|\mathcal{V}|=S,  p_{v,0}=r_v, p_0=\frac{c_{\text{sum}}}{2s},B_v=\frac Bs$ for P1, and the problem becomes

\begin{align}
\textbf{P2}: &\min_{\{x_{v}\}} \frac{\sigma}{\sqrt{\sum\limits_{v\in\mathcal{V}}x_{v}b}} + \left| 
\frac{ \sum\limits_{v \in \mathcal{V}}x_{v} r_v}{\sum\limits_{v \in \mathcal{V}} x_{v}}-\frac{c_{\text{sum}}}{2s} \right|G_c\\
&s.t. \quad x_{v} \in\{0,1\}, v\in\mathcal{V},\\
&\quad\quad \sum_{v\in\mathcal{V}} x_{v}\le s.\\
\end{align}

If we choose proper $\sigma,b,G_c$ such that $\frac{\sigma}{\sqrt{b}G_c}(\frac1{\sqrt{s-1}}-\frac1{\sqrt{s}})\ge \frac{c_{\text{sum}}}{2s}$ (this is easy to satisfy as long as $\frac{\sigma}{\sqrt{b}G_c}$ is large enough), then we have $\sum_{v\in\mathcal{V}} x^*_{v}= s$, because

\begin{align}
    \frac{\sigma}{\sqrt{sb}} + \left| 
\frac{ \sum\limits_{v \in \mathcal{V}}x_{v} r_v}{s}-\frac{c_{\text{sum}}}{2s} \right|G_c
\le G_c\left(\frac{\sigma}{\sqrt{sb}G_c} + \frac{c_{\text{sum}}}{2s}\right)\le\frac{\sigma}{\sqrt{(s-1)b}},
\end{align}
and thus we can not find a solution with $\sum_{v\in\mathcal{V}} x_{v}< s$ that has smaller objective value. Then the problem becomes

\begin{align}
\textbf{P3}: &\min_{\{x_{v}\}} \left| 
 \sum\limits_{v \in \mathcal{V}}x_{v} r_v-\frac{c_{\text{sum}}}{2} \right|\\
&s.t. \quad x_{v} \in\{0,1\}, v\in\mathcal{V},\\
&\quad\quad \sum_{v\in\mathcal{V}} x_{v}=s.\\
\end{align}

If the optimal value of P3 is 0, then the partition problem has a solution with set $\{i|x_i=1\}$ and its complementary set; If the optimal value of P3 is not 0, then the partition problem has no solution with a set of size $s$.

\twocolumn

\end{document}